\documentclass[11pt]{article}
\usepackage{multirow}
\usepackage[preprint]{acl}

\usepackage{times}
\usepackage{latexsym}
\usepackage{booktabs}
\usepackage{float}
\usepackage[T1]{fontenc}

\usepackage[utf8]{inputenc}

\usepackage{microtype}

\usepackage{inconsolata}
\usepackage{xcolor}
\usepackage{graphicx}
\usepackage{amsmath}
\usepackage{amsthm}
\usepackage{amssymb}
\usepackage{enumitem}
\usepackage{pifont}
\usepackage{ulem}

\theoremstyle{plain}
\newtheorem{proposition}{Proposition}[section]
\newtheorem{corollary}{Corollary}[proposition]
\theoremstyle{definition}

\newcommand{\cmark}{\textcolor{green!60!black}{$\checkmark$}}
\newcommand{\xmark}{\textcolor{red}{$\times$}}
\newcommand{\Epub}{\mathcal{E}^{\mathrm{pub}}}
\newcommand{\Epriv}[1]{\mathcal{E}_{#1}^{\mathrm{priv}}}
\newcommand{\Eview}[1]{\mathcal{E}_{#1}}

\usepackage{tcolorbox}
\tcbuselibrary{skins,breakable}
\usepackage{tabularx}

\definecolor{agA}{HTML}{1F4E79}
\definecolor{agB}{HTML}{B7472A}
\definecolor{agC}{HTML}{2E7D32}
\definecolor{okcol}{HTML}{2E7D32}
\definecolor{badcol}{HTML}{B71C1C}

\newcommand{\agA}[1]{\textcolor{agA}{\textbf{#1}}}
\newcommand{\agB}[1]{\textcolor{agB}{\textbf{#1}}}
\newcommand{\agC}[1]{\textcolor{agC}{\textbf{#1}}}
\newcolumntype{R}{>{\raggedright\arraybackslash}X}

\newtcolorbox{successcase}[1]{%
  enhanced, breakable,
  colback=okcol!3,
  colframe=okcol!50!black,
  arc=2pt, boxrule=0.5pt,
  title={\raggedright $\checkmark$\,\,#1},
  fonttitle=\bfseries\small,
  colbacktitle=okcol!55!black,
  coltitle=white,
  left=6pt, right=6pt, top=5pt, bottom=5pt,
  before skip=8pt, after skip=8pt,
}

\newtcolorbox{failurecase}[1]{%
  enhanced, breakable,
  colback=badcol!3,
  colframe=badcol!50!black,
  arc=2pt, boxrule=0.5pt,
  title={\raggedright $\times$\,\,#1},
  fonttitle=\bfseries\small,
  colbacktitle=badcol!55!black,
  coltitle=white,
  left=6pt, right=6pt, top=5pt, bottom=5pt,
  before skip=8pt, after skip=8pt,
}

\title{Diverse Evidence, Better Forecasts: \\ Multi-Agent Deliberation Under Information Asymmetry}

\author{Yuante Li\textsuperscript{\rm 1}\thanks{Equal Contribution.}\thanks{Corresponding Author},\hspace{4pt} Yicheng Tao\textsuperscript{\rm 1}\footnotemark[1],\hspace{4pt} Kate Zhang\textsuperscript{\rm 1}\footnotemark[1], \\
\textbf{Taozhi Wang\textsuperscript{\rm 1}, \hspace{4pt}Gefei Gu\textsuperscript{\rm 1}, \hspace{4pt}Yaxin Zhou\textsuperscript{\rm 2}} \\
\textsuperscript{\rm 1} School of Computer Science, Carnegie Mellon University \\
\textsuperscript{\rm 2} College of Engineering, Carnegie Mellon University \\
\texttt{\{yuantel, katezhan\}@cs.cmu.edu}, \texttt{yichengtao@cmu.edu}
}

\begin{document}
\maketitle
\begin{abstract}
Multi-agent systems are increasingly used for forecasting future events, as deliberation among multiple LLMs is believed to improve reasoning and calibration. Yet existing approaches overlook a critical design choice: what information each agent receives.
When all agents are given identical evidence, deliberation collapses into herding rather than genuine belief revision, leaving multi-agent systems little better than a single agent. 
We identify this as a fundamental gap and propose \textit{designed information asymmetry} to close it: by partitioning evidence into shared public and disjoint private subsets, each agent holds exclusive knowledge that can only reach others through deliberation.
We theoretically show that this decomposition reduces inter-agent error correlation, and instantiate it in \textbf{InfoDelphi}, a framework combining relevance-aware evidence routing, rationale-based iterative deliberation, and confidence-weighted aggregation.
On \textsc{PolyGym}, a benchmark of 375 binary forecasting questions derived from real-world prediction markets, InfoDelphi outperforms the strongest single-agent and multi-agent baselines by 12--18\% in Brier score and 4--8 percentage points in accuracy.
More detailed experiments confirm that removing information asymmetry eliminates most deliberation gains, establishing diversity of input as the key enabler of effective multi-agent reasoning.
\end{abstract}

\section{Introduction}
\label{sec:intro}

Prediction markets such as Polymarket\footnote{\url{https://polymarket.com/}} aggregate the beliefs of thousands of participants into real-time probability estimates for future events, achieving forecasting
accuracy that often rivals or exceeds expert panels~\citep{tetlock2005expert}. With every question resolved against a definitive ground truth, these markets provide a natural benchmark for machine intelligence: \textit{\textbf{can an automated system reason over evidence and match human-level forecasting?}}

Recent work has shown that large language models (LLMs) equipped with retrieved evidence can approach human-level performance on binary forecasting tasks~\citep{halawi2024approaching,turtel2025llms}, but these advances rely primarily on single-agent, single-pass pipelines that treat the entire evidence corpus as a monolithic context window.
A natural extension is the \textit{multi-agent} setting, where a panel of LLMs deliberates over the evidence before producing a collective forecast.
Such frameworks draw inspiration from well-studied human mechanisms for aggregating distributed knowledge---expert committees, structured interviews, and the Delphi method~\citep{DALKEY1969408}---and have shown promise in general reasoning tasks~\citep{chen2024reconcile,wangmixture}.

\vspace{-0.8em}

\begin{figure}[ht]
    \centering
    \includegraphics[width=1\linewidth]{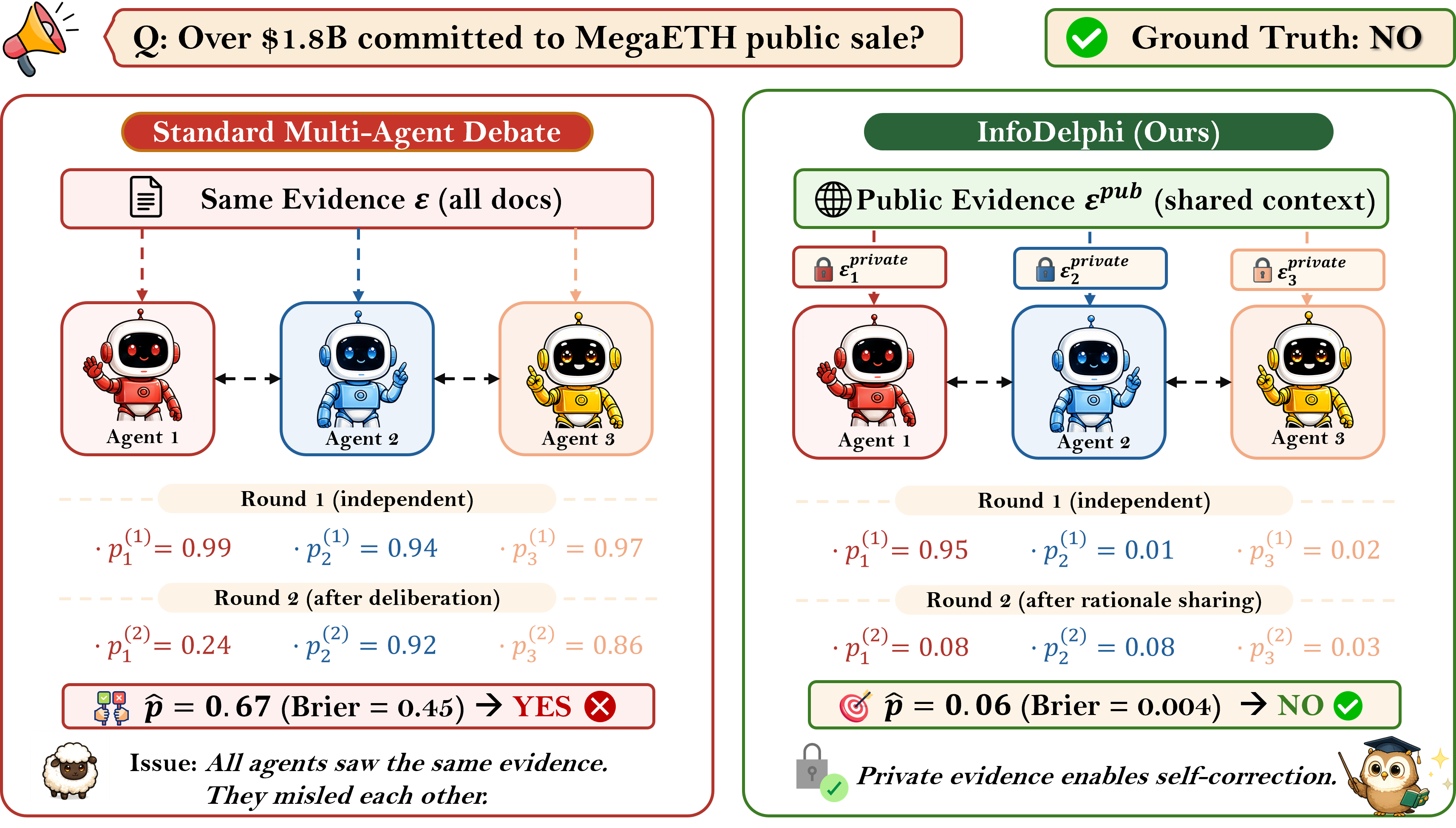}
    \caption{Effect of information asymmetry on a real prediction. \textbf{Left}: agents with identical evidence converge to the same incorrect forecast. \textbf{Right}: agents with complementary private evidence self-correct through rationale sharing, reducing Brier score from 0.45 to 0.004.}
    \label{fig:intro}
\end{figure}

\vspace{-0.8em}
Effective collective reasoning, however, requires that participants hold \textit{informationally diverse} views. As formalized in the Condorcet jury tradition and established empirically for human forecasters~\citep{Lorenz2011HowSI}, when agents share identical information, deliberation fails to improve collective accuracy: they reach the same conclusions independently, and iterative exchange merely reinforces the shared prior. In LLM systems this problem runs deeper. Recent work shows that models from different providers make correlated errors at rates above 60\%~\citep{kim2025correlated}, and that standard multi-agent debate under homogeneous input behaves as a \textit{martingale} whose expected accuracy does not improve over rounds~\citep{choi2026debate}.
Figure~\ref{fig:intro} illustrates this failure mode on a real prediction market question: when all agents receive identical evidence, they converge to the same incorrect forecast (left), whereas agents equipped with complementary private evidence can self-correct through deliberation (right).

To address this, we propose \textit{designed information asymmetry}, treating information asymmetry not as a limitation to overcome but as a principle to exploit.
We introduce \textbf{InfoDelphi}, a multi-agent forecasting framework that partitions evidence into a shared public pool and disjoint private subsets, giving each agent distinct expertise while preserving common ground for communication.
This mirrors effective human expert panels, where participants contribute complementary knowledge rather than reading the same briefing materials.
InfoDelphi constructs informative evidence partitions using BM25 relevance ranking.
During deliberation, agents exchange rationale excerpts rather than raw evidence, enabling efficient propagation of private signals.
Final forecasts are aggregated through confidence-weighted averaging in logit space, emphasizing the predictions of the best-informed agents.
Together, these design choices ensure that deliberation surfaces new information rather than reinforcing shared priors.

Furthermore, existing forecasting benchmarks entangle retrieval quality with reasoning ability, making it impossible to assess whether gains come from better search or better deliberation.
We introduce \textsc{PolyGym}, a controlled benchmark of 375 binary Polymarket questions with fixed, pre-retrieved evidence shared across all methods.
On \textsc{PolyGym}, InfoDelphi outperforms the strongest single-agent and multi-agent baselines by 12--18\% in Brier score and 4--8 percentage points in accuracy.
Detailed ablations confirm that the public/private split and rationale sharing are jointly necessary, as removing either eliminates most gains, and cross-model experiments demonstrate consistent improvements from open-source to frontier proprietary LLMs.
These results establish designed information asymmetry as a general principle for multi-agent reasoning under uncertainty.

Our main contributions are as follows:
\begin{itemize}[nosep,leftmargin=20pt,topsep=2pt,itemsep=1pt,parsep=0pt]
  \item We propose \textit{designed information asymmetry}  for multi-agent reasoning, and theoretically show that public/private evidence partitioning reduces inter-agent error correlation.
  
  \item We introduce \textbf{InfoDelphi}, which instantiates this principle through relevance-aware evidence routing, rationale-based deliberation, and confidence-weighted aggregation.
  
  \item We construct \textsc{PolyGym}, a controlled forecasting benchmark that isolates information utilization from retrieval, and conduct extensive experiments showing that (a) deliberation without information diversity provides little benefit, (b) rationale sharing is critical for cross-agent information transfer, and (c) gains generalize across different LLM backends.
\end{itemize}

\section{Related Work}
\label{sec:related}

\paragraph{LLM-based Forecasting.}
Early work established a substantial gap between LLM and human forecasting accuracy.
\citet{zou2022forecasting} find that language models achieve only 65\% accuracy on Autocast compared to 92\% for human aggregates, while \citet{schoenegger2023large} show that GPT-4 without augmentation does not significantly outperform a random baseline on Metaculus.
Retrieval-augmented generation~\citep{lewis2020retrieval} substantially closes this gap: \citet{yanautocast++} improve Autocast accuracy through zero-shot re-ranking and summarization of retrieved news, and \citet{halawi2024approaching} present a full pipeline achieving near-human performance on Polymarket and Metaculus.
More recent work focuses on training: \citet{turtel2025llms} fine-tune on 12,100 resolved Polymarket questions, and \citet{chandakscaling} train an 8B model via reinforcement
learning that matches proprietary LLMs on prediction benchmarks.
On the aggregation side, \citet{schoenegger2024wisdom} show that ensembling diverse
LLMs produces forecasts statistically indistinguishable from human forecasters.
Benchmark design has matured in parallel: ForecastBench~\citep{karger2024forecastbench}
and FutureX~\citep{zeng2025futurex} provide continuously updated evaluations with strict
temporal controls, and \citet{paleka2025pitfalls} identify systematic pitfalls including
temporal leakage and piggybacking on existing forecasts.
Despite this progress, existing pipelines treat forecasting as a single-agent process
and do not study evidence distribution across a deliberating panel.

\paragraph{Multi-Agent Reasoning and Deliberation.}
Multi-agent debate has emerged as a general strategy for improving LLM reasoning~\citep{du2023improving}.
\citet{wangself} show that sampling diverse chain-of-thought paths and selecting by majority vote improves accuracy, while \citet{wangmixture} propose Mixture-of-Agents, a layered architecture where later agents condition on all prior outputs.
\citet{chen2024reconcile} demonstrate that a round-table conference of diverse LLMs with confidence-weighted voting outperforms single-agent baselines across reasoning benchmarks.
However, recent theory suggests multi-agent debate is not uniformly beneficial.
\citet{choi2025debate} prove that under a Dirichlet-Categorical belief model, standard multi-agent debate forms a \textit{martingale}: expected correctness does not improve over rounds when agents receive identical inputs.
\citet{shin2026reasoning} formalize this via the Data Processing Inequality: closed-system deliberation forms a Markov chain, so mutual information with the ground truth can only decrease.
\citet{kim2025correlated} document that LLM errors are 60\%+ correlated, implying that naive ensembling hits a non-zero error floor.
Our work addresses these limitations by introducing designed information asymmetry to break correlation and convert the martingale into productive belief revision.

\vspace{-0.5em}

\paragraph{Information Aggregation and Collective Intelligence.}
The conditions under which groups outperform individuals are well studied.
\citet{surowiecki2004wisdom} identify independence, diversity, and decentralized knowledge as key conditions for collective intelligence, while \citet{hong2004groups} show that diverse groups can outperform groups of only high-ability individuals.
The DeGroot model~\citep{612bb50a-4bdd-3a32-b6eb-7837600cc9c4} formalizes iterative belief aggregation, and the Delphi method~\citep{DALKEY1969408} applies structured multi-round elicitation to reduce expert disagreement.
For probabilistic forecasting, the linear opinion pool~\citep{9ca5e582-1e4c-323f-9ab1-c62032d22e48} and logit-space extremization~\citep{SATOPAA2014344} provide principled aggregation.
Meanwhile, social influence can suppress diversity without improving accuracy~\citep{Lorenz2011HowSI}, and informational cascades may cause agents to ignore private signals once consensus emerges~\citep{bikhchandani1992theory}.
Our framework operationalizes the positive conditions for collective intelligence while explicitly mitigating herding and information loss through evidence partitioning and rationale-based communication.

\section{Methodology}
\label{sec:method}

\subsection{Problem Formulation}

Let $q$ denote a binary forecasting question with resolution date $\tau$ and ground truth $y \in \{0,1\}$.
Each question is accompanied by a retrieved evidence corpus
$\mathcal{E} = \{e_1, \dots, e_n\}$, where $e_i$ consists of a title, URL, and textual snippet retrieved prior to $\tau$.
Therefore, given the question $q$ and evidence corpus $\mathcal{E}$, the goal is to estimate
$p(y = 1 \mid q, \mathcal{E})$,
where $p \in [0,1]$ denotes the predicted probability that the event resolves positively.

\subsection{Theoretical Motivation}
\label{sec:theory}

We begin by formalizing why information asymmetry improves collective forecasting, drawing on the bias-variance-covariance decomposition of the ensemble mean squared error.

\paragraph{Homogeneous input induces correlated errors.}
Consider $J$ agents each receiving the same evidence $\mathcal{E}$.
Let $p_j$ denote agent $j$'s forecast and $y$ the ground truth.
The ensemble forecast $\bar{p} = \frac{1}{J}\sum_j p_j$ has mean squared error:
\begin{equation}
  \mathrm{MSE}(\bar{p}) \;=\; \mathrm{Bias}^2 + \bar{\sigma}^2 - \mathrm{Div},
  \label{eq:bvd}
\end{equation}
where $\bar{\sigma}^2 = \frac{1}{J}\sum_j \mathrm{Var}(p_j)$ is the average agent variance and $\mathrm{Div} = \frac{1}{J}\sum_j \mathrm{Var}(p_j - \bar{p})$ is the diversity term~\citep{wood2023unified}.
When all agents observe identical input, their prediction errors $\epsilon_j = p_j - y$ are driven by the same evidence and thus highly correlated: $\mathrm{Cov}(\epsilon_i, \epsilon_j) \approx \mathrm{Var}(\epsilon)$ for $i \neq j$.
In this regime, the diversity term $\mathrm{Div}$ collapses toward zero, because agents conditioned on identical input produce nearly identical forecasts, and the ensemble provides no benefit over a single agent.
Recent empirical work confirms this: LLM errors are correlated across providers, and optimal aggregation under full correlation has a non-zero error floor~\citep{kim2025correlated,turkmen2026don}.

\paragraph{Information asymmetry reduces error correlation.}
We now show that the public/private evidence partition directly reduces this correlation.

\begin{proposition}[Diversity-Induced Decorrelation]
    \label{prop:decorrelation}
    Let each agent's prediction error decompose as
    $\epsilon_j = \rho \cdot \epsilon^{\mathrm{pub}} + (1-\rho) \cdot \epsilon_j^{\mathrm{priv}}$,
    where $\epsilon^{\mathrm{pub}}$ is the error component driven by shared public evidence,
    $\epsilon_j^{\mathrm{priv}}$ is the component driven by agent $j$'s private evidence,
    and $\rho \in [0,1]$ is the public ratio.
    Let $\sigma_{\mathrm{pub}}^2 = \mathrm{Var}(\epsilon^{\mathrm{pub}})$ and assume:
    \begin{enumerate}[label=(\roman*),nosep,leftmargin=20pt,topsep=2pt,itemsep=1pt]
    \item $\mathrm{Cov}(\epsilon_i^{\mathrm{priv}}, \epsilon_j^{\mathrm{priv}}) = 0$ for $i \neq j$ \textit{(private--private independence)};
    \item $\mathrm{Cov}(\epsilon^{\mathrm{pub}}, \epsilon_k^{\mathrm{priv}}) = 0$ for all $k$ \textit{(public--private independence)};
    \item $\mathrm{Var}(\epsilon_j^{\mathrm{priv}}) = \sigma_{\mathrm{priv}}^2$ for all $j$ \textit{(symmetric private noise)}.
  \end{enumerate}
    Then:
    \begin{equation}
      \mathrm{Corr}(\epsilon_i, \epsilon_j) \;=\;
      \frac{\rho^2\,\sigma_{\mathrm{pub}}^2}
           {\rho^2\,\sigma_{\mathrm{pub}}^2 + (1{-}\rho)^2\,\sigma_{\mathrm{priv}}^2}.
      \label{eq:corr}
    \end{equation}
  \end{proposition}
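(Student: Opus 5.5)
The plan is a direct second-moment computation: expand the covariance and the variances using bilinearity, and let assumptions (i)--(iii) eliminate every cross term. Fix $i \neq j$ and write $\epsilon_i = \rho\,\epsilon^{\mathrm{pub}} + (1-\rho)\,\epsilon_i^{\mathrm{priv}}$ and $\epsilon_j = \rho\,\epsilon^{\mathrm{pub}} + (1-\rho)\,\epsilon_j^{\mathrm{priv}}$.

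\textbf{Step 1 (covariance).} Bilinearity splits $\mathrm{Cov}(\epsilon_i,\epsilon_j)$ into four terms:
\begin{itemize}
\item $\rho^2\,\mathrm{Var}(\epsilon^{\mathrm{pub}})$;
\item $\rho(1-\rho)\,\mathrm{Cov}(\epsilon^{\mathrm{pub}},\epsilon_j^{\mathrm{priv}})$;
\item $\rho(1-\rho)\,\mathrm{Cov}(\epsilon_i^{\mathrm{priv}},\epsilon^{\mathrm{pub}})$;
\item $(1-\rho)^2\,\mathrm{Cov}(\epsilon_i^{\mathrm{priv}},\epsilon_j^{\mathrm{priv}})$.
\end{itemize}
Assumption (ii) kills the two middle terms. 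Assumption (i) kills the last one, since $i\neq j$. This leaves $\mathrm{Cov}(\epsilon_i,\epsilon_j)=\rho^2\sigma_{\mathrm{pub}}^2$.

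\textbf{Step 2 (variances).} For any $k$, expanding $\mathrm{Var}(\epsilon_k)$ gives
\[
\rho^2\sigma_{\mathrm{pub}}^2 + 2\rho(1-\rho)\,\mathrm{Cov}(\epsilon^{\mathrm{pub}},\epsilon_k^{\mathrm{priv}}) + (1-\rho)^2\,\mathrm{Var}(\epsilon_k^{\mathrm{priv}}).
\]
The cross term vanishes by (ii), and (iii) sets $\mathrm{Var}(\epsilon_k^{\mathrm{priv}})=\sigma_{\mathrm{priv}}^2$. Hence $\mathrm{Var}(\epsilon_i)=\mathrm{Var}(\epsilon_j)=\rho^2\sigma_{\mathrm{pub}}^2+(1-\rho)^2\sigma_{\mathrm{priv}}^2$. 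Because the two variances are equal, $\sqrt{\mathrm{Var}(\epsilon_i)\mathrm{Var}(\epsilon_j)}$ is exactly this common value, and no square root remains.

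\textbf{Step 3 (conclude).} Dividing the result of Step 1 by the result of Step 2 gives \eqref{eq:corr}.

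The only real obstacle is well-definedness. The correlation exists only if the denominator is positive, i.e.\ not both $\rho^2\sigma_{\mathrm{pub}}^2=0$ and $(1-\rho)^2\sigma_{\mathrm{priv}}^2=0$. This excludes, for example, $\rho=1$ with $\sigma_{\mathrm{pub}}^2=0$. I would state this nondegeneracy condition explicitly as implicit in the proposition and otherwise treat the result as a routine bilinearity computation.

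Since the stated goal is to show that the partition reduces error correlation, I would also add the monotonicity remark. Writing the ratio as $1/\bigl(1+\tfrac{(1-\rho)^2\sigma_{\mathrm{priv}}^2}{\rho^2\sigma_{\mathrm{pub}}^2}\bigr)$ for $\rho>0$, the correlation is increasing in $\rho$ whenever $\sigma_{\mathrm{priv}}^2>0$. It is strictly below the homogeneous-input value $1$ at $\rho=1$ whenever $\rho<1$. So lowering the public ratio decorrelates the agents' errors, which is what feeds the $\mathrm{Div}$ term in \eqref{eq:bvd}.
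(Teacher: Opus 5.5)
Your proof is correct and matches the paper's own argument: the four-term bilinear expansion of the covariance with (i)--(ii) eliminating the cross terms, the variance expansion with (ii)--(iii), and division using the equality of the two variances. Your explicit nondegeneracy condition, a positive denominator that excludes for example $\rho=1$ with $\sigma_{\mathrm{pub}}^2=0$, is a small but real improvement; the paper leaves it implicit.
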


\begin{proof}
  By bilinearity of covariance and assumptions (i)--(ii), all cross-terms vanish,
  giving $\mathrm{Cov}(\epsilon_i, \epsilon_j) = \rho^2\sigma_{\mathrm{pub}}^2$.
  By assumptions (ii)--(iii), the cross-term in the variance expansion also vanishes,
  so $\mathrm{Var}(\epsilon_j) = \rho^2\sigma_{\mathrm{pub}}^2 + (1{-}\rho)^2\sigma_{\mathrm{priv}}^2$,
  which equals $\mathrm{Var}(\epsilon_i)$ by symmetry.
  Dividing covariance by $\sqrt{\mathrm{Var}(\epsilon_i)\mathrm{Var}(\epsilon_j)} = \mathrm{Var}(\epsilon_j)$
  gives~\eqref{eq:corr}.
  \end{proof}

  \noindent
  Proposition~\ref{prop:decorrelation} shows that inter-agent error correlation decreases
  monotonically as $\rho$ decreases from 1 to 0.
  As a concrete instance, at the default $\rho = 0.5$ with equal-variance components
  ($\sigma_{\mathrm{pub}} = \sigma_{\mathrm{priv}}$), correlation equals $0.25/(0.25+0.25) = 0.5$,
  half the all-public baseline ($\rho{=}1$, correlation~$= 1$).
  At $\rho = 0$ (all-private), agents are fully decorrelated; at $\rho = 1$ (all-public,
  standard debate), correlation reaches its maximum.

\begin{figure*}[!ht]
    \centering
    \includegraphics[width=1\textwidth]{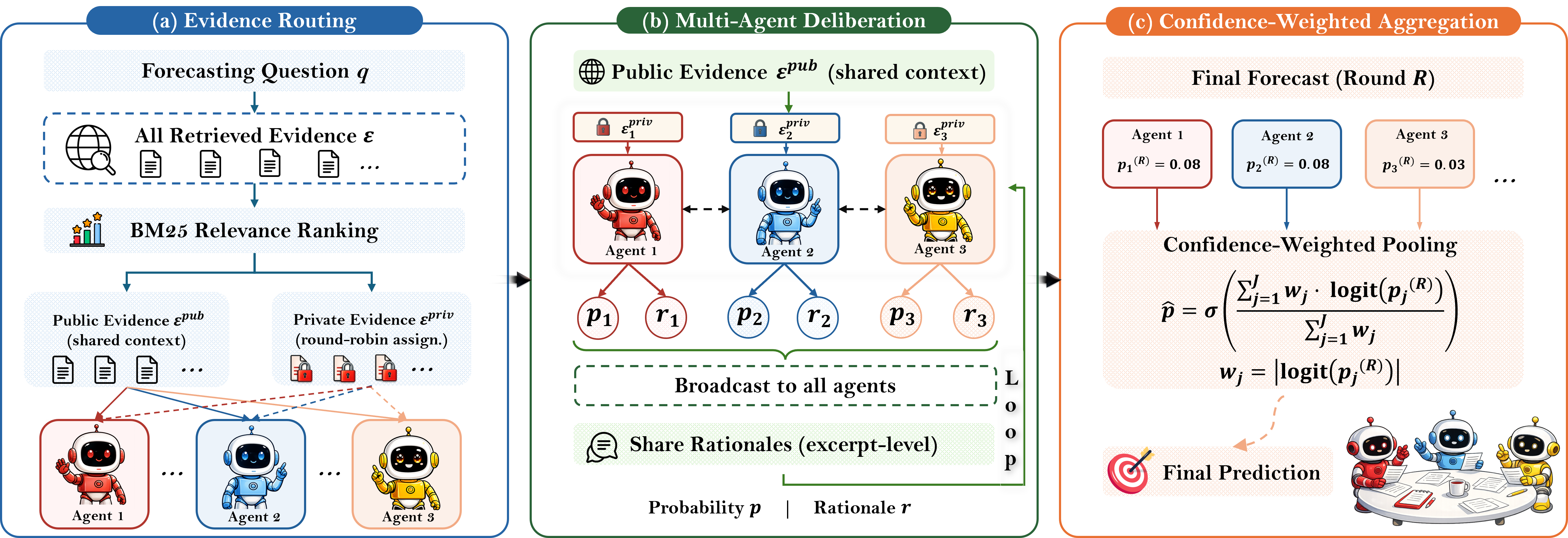}
  \caption{Overview of the \textbf{InfoDelphi} framework.
  (a)~\textbf{Evidence Routing}: documents are ranked by BM25 relevance and
  distributed round-robin, constructing a shared public pool and disjoint private
  subsets for each agent.
  (b)~\textbf{Multi-Agent Deliberation}: agents exchange structured forecasts and rationales over two rounds via a history summarizer, enabling private information to propagate.
  (c)~\textbf{Confidence-Weighted Aggregation}: final forecasts are combined by prediction-extremity weights ($w_j = |\mathrm{logit}(p_j)|$) into a single probability estimate.}
  \vspace{-1.5em}
    \label{fig:arch}
\end{figure*}

\begin{corollary}[Optimal Public Ratio Trade-off]
\label{cor:tradeoff}
Decreasing $\rho$ reduces error correlation (improving ensemble gain) but also reduces the shared context necessary for agents to interpret each other's rationales.
An optimal $\rho^* \in (0,1)$ therefore exists that balances diversity against communicability.
\end{corollary}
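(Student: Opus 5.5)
The corollary is informal, so the plan is to make it precise with a scalar objective $L(\rho)$ and argue by continuity and endpoint behaviour. I would write the expected loss of the deliberated ensemble as
\[
L(\rho) \;=\; D(\rho) + C(\rho).
\]
The \emph{diversity} part $D(\rho)$ is the ensemble error implied by Proposition~\ref{prop:decorrelation} together with the decomposition~\eqref{eq:bvd}. With $J$ agents of common error variance $V(\rho)=\rho^2\sigma_{\mathrm{pub}}^2+(1-\rho)^2\sigma_{\mathrm{priv}}^2$ and pairwise correlation $r(\rho)$ given by~\eqref{eq:corr}, the error variance of $\bar p$ is
\[
\tfrac{1}{J}V(\rho)\bigl(1+(J-1)r(\rho)\bigr) \;=\; \rho^2\sigma_{\mathrm{pub}}^2 + \tfrac{1}{J}(1-\rho)^2\sigma_{\mathrm{priv}}^2 .
\]
The \emph{communicability} part $C(\rho)\ge 0$ is the extra error from agents failing to interpret each other's rationales.

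Step one is to state the modelling assumptions that the corollary leaves implicit.
\begin{itemize}
  \item $C$ is continuous and nonincreasing on $[0,1]$, with $C(1)=0$.
  \item $C$ blows up or is large as shared context vanishes, and $C'(1)=0$ or at least $C'(1)>-\infty$, so that communication cost is insensitive near full sharing.
  \item $D$ is continuous. Proposition~\ref{prop:decorrelation} gives $r$ increasing in $\rho$, and $D$ is a smooth quadratic in $\rho$.
\end{itemize}

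Step two treats the right endpoint. At $\rho=1$ we get $D'(1)=2\sigma_{\mathrm{pub}}^2>0$, while $C'(1)$ is finite and nonpositive but dominated, in the sense $D'(1)+C'(1)>0$. So $L$ strictly decreases when moving left from $1$, and $\rho=1$ is not a minimizer.

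Step three treats the left endpoint. Assume $C(\rho)\to C_0$ as $\rho\to 0$, with $C_0 > \min_{\rho\in(0,1)} D(\rho) - D(0) + \min_{\rho \in (0,1)} C(\rho)$. Equivalently, assume $C'(0^+)$ is sufficiently negative, meaning that adding a little shared context sharply improves interpretability. Then $\rho=0$ is not a minimizer either.

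Step four concludes. By continuity of $L$ on the compact interval $[0,1]$, a minimizer exists by Weierstrass. Steps two and three exclude both endpoints, so the minimizer $\rho^*$ lies in $(0,1)$.

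The main obstacle is step three. Nothing in Proposition~\ref{prop:decorrelation} forces an interior optimum. Indeed $D$ alone is minimized at $\rho^\dagger=\sigma_{\mathrm{priv}}^2/(J\sigma_{\mathrm{pub}}^2+\sigma_{\mathrm{priv}}^2)$, which is already interior whenever $\sigma_{\mathrm{priv}}>0$. So the interior claim rests entirely on the shape of $C$, in particular the condition $D'(1)+C'(1)>0$ and the strength of $C$ near $0$. My first attempt would be a concrete parametric choice, for instance $C(\rho)=\kappa(1-\rho)^2/\rho$ or $C(\rho)=\kappa\,(1-\rho)$ with $\kappa$ bounded. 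For these I would verify the sign conditions explicitly and solve $L'(\rho^*)=0$ in closed form.

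I would present the result as an existence statement conditional on these stated regularity assumptions, not as an unconditional theorem. The paper's own phrasing, that $\rho^*$ "balances diversity against communicability", supports this reading.
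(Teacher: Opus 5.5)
\paragraph{Review of the proposal.} Your architecture matches the paper's. The paper writes $L(\rho)=V(\rho)+D(\rho)^2$, where $V(\rho)=\frac{\sigma^2(\rho)}{J}[1+(J-1)r(\rho)]$ comes from Proposition~\ref{prop:decorrelation} and is your $D$. Its bias term $D(\rho)^2$ is modeled as non-increasing in the communicability $I(\epsilon_i;\epsilon_j)$ and plays the role of your $C$. The paper then excludes both endpoints and applies the Extreme Value Theorem. The genuine problem is your step three. Rearranged, your hypothesis says $L(0)>\inf_{(0,1)}D+\inf_{(0,1)}C$. Since $\inf(D+C)\ge\inf D+\inf C$, this is weaker than the needed $L(0)>\inf_{(0,1)}(D+C)$, so it does not exclude $\rho=0$. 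Under your other assumptions it is even vacuous. $C\ge 0$ continuous and nonincreasing with $C(1)=0$ gives $\inf_{(0,1)}C=0$, and $D'(0)<0$ gives $\inf_{(0,1)}D<D(0)$, so the right-hand side of your condition is negative. The ``equivalent'' version via $C'(0^+)$ is neither equivalent nor needed.

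The correct argument at $\rho=0$ is the paper's. $D'(0)=-2\sigma_{\mathrm{priv}}^2/J<0$ together with $C(\varepsilon)\le C(0)$ gives $L(\varepsilon)<L(0)$ for small $\varepsilon>0$; this needs $\sigma_{\mathrm{priv}}>0$, which the paper's argument also needs. Your own $\rho^\dagger=\sigma_{\mathrm{priv}}^2/(J\sigma_{\mathrm{pub}}^2+\sigma_{\mathrm{priv}}^2)$ gives more. For $\rho<\rho^\dagger$, $D$ is strictly larger and $C$ is no smaller than at $\rho^\dagger$, so every minimizer lies in $[\rho^\dagger,1]$. Your diagnosis is therefore inverted: the left endpoint is automatic, and the only real condition concerns $\rho=1$.

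At $\rho=1$ your explicit hypothesis $D'(1)+C'(1)>0$, i.e.\ $C'(1)>-2\sigma_{\mathrm{pub}}^2$, is exactly right. It is more careful than the paper, which deduces $L(1-\varepsilon)<L(1)$ from its bias term being approximately unchanged ``by continuity''. Continuity only makes that change $o(1)$, not $o(\varepsilon)$, so it cannot be weighed against the first-order drop $2\sigma_{\mathrm{pub}}^2\varepsilon$ in the variance term.

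Finally, if $C$ blows up at $0$, as in your example $\kappa(1-\rho)^2/\rho$, then $L$ is not continuous on $[0,1]$. Apply Weierstrass on $[\rho^\dagger,1]$ instead, which the domination argument above justifies.
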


\paragraph{Rationale sharing enables cross-agent information transfer.}
Even with disjoint evidence, agents cannot benefit from each other's private signals unless they communicate their \textit{reasoning}, not just their conclusions.
By the Data Processing Inequality~\citep{shin2026reasoning}, a deliberation system where agents only exchange numeric estimates forms a Markov chain $\mathcal{E}_j \to p_j^{(1)} \to p_j^{(2)} \to \cdots$, so $I(y; p_j^{(r+1)}) \leq I(y; p_j^{(r)})$: information with the ground truth can only decrease.
Sharing rationales breaks this Markov structure: agent $i$ now observes a summary of agent $j$'s reasoning over $\Epriv{j}$, effectively receiving a lossy projection of $j$'s private evidence.
This converts the system from closed to open, allowing $I(y; p_j^{(r+1)}) > I(y; p_j^{(r)})$, which is a necessary condition for iterative deliberation to improve over a single round.

\subsection{The InfoDelphi Framework}

\paragraph{Overview.}

Figure~\ref{fig:arch} illustrates the end-to-end InfoDelphi pipeline.
Our theoretical analysis (Section~\ref{sec:theory}) identifies three conditions for effective collective forecasting: agents must hold \textit{diverse signals} (to reduce error correlation), share \textit{common ground} (to enable meaningful communication), and exchange \textit{reasoning} (to transfer private information).
InfoDelphi instantiates these conditions through three corresponding stages:
(1)~\textit{Evidence routing} partitions $\mathcal{E}$ into a shared public pool and disjoint private subsets via BM25 relevance ranking, simultaneously satisfying diversity and common ground;
(2)~\textit{Iterative deliberation} enables agents to exchange rationale excerpts across rounds, converting private signals into shared knowledge;
(3)~\textit{Confidence-weighted aggregation} combines final-round forecasts in logit space, amplifying agents whose private evidence supports stronger predictions.

\paragraph{Evidence Routing.}
\label{sec:routing}

We consider a panel of $J$ agents $\{A_1, \dots, A_J\}$ and a public ratio $\rho \in [0,1]$.
The corpus is partitioned into a shared public subset $\Epub$ and $J$ disjoint private subsets:
\begin{equation}
  \mathcal{E} \;=\; \Epub \;\cup\; \bigsqcup_{j=1}^{J} \Epriv{j},
  \ |\Epub| \;=\; \lfloor \rho \cdot |\mathcal{E}| \rfloor.
  \label{eq:partition}
\end{equation}
Agent $A_j$ observes the private view $\Eview{j} = \Epub \cup \Epriv{j}$.

To maximize information diversity across private subsets, we allocate documents via \textit{BM25 relevance routing}: each document $e_i \in \mathcal{E} \setminus \Epub$ receives a relevance score $s_i = \mathrm{BM25}(e_i, q)$.
The top-$|\Epub|$ scoring documents form $\Epub$: since these are the most directly relevant to the question, they serve as the shared ``language'' through which agents can interpret each other's reasoning during deliberation.
The remaining documents are sorted by $s_i$ and assigned to agents in round-robin order, so that adjacent relevance levels are spread across distinct private views.
This design ensures that each agent holds a unique slice of peripheral evidence while sharing access to core information, enabling diversity without sacrificing common ground.

\paragraph{Iterative Deliberation with Rationale Sharing.}
  \label{sec:deliberation}
 
  At each round $r \in \{1, \dots, R\}$, agent $A_j$ produces a structured forecast:
  \begin{equation}
    f_j^{(r)} \;=\; \bigl(p_j^{(r)},\, \ell_j^{(r)},\, r_j^{(r)}\bigr),
  \end{equation}
  where $p_j^{(r)} \in [0,1]$ is a probability estimate, $\ell_j^{(r)} \in \{0,1\}$ is a categorical prediction, and $r_j^{(r)}$ is a free-text
  rationale.
  The agent conditions on its private view and a history summary of prior-round forecasts:
  \begin{equation}
    f_j^{(r)} \;\sim\; \mathrm{LLM}\!\left(q,\;\Eview{j},\;h^{(r-1)}\right),
    \label{eq:agent}
  \end{equation}
  where $h^{(0)} = \varnothing$ and, for $r \geq 2$:
  \begin{equation}
    h^{(r-1)} \;=\; \bigl\{(p_k^{(r-1)},\,\ell_k^{(r-1)},\,r_k^{(r-1)})\bigr\}_{k \neq j}.
    \label{eq:history}
  \end{equation}
  Crucially, $h^{(r-1)}$ includes rationale excerpts from other agents, not only their numeric estimates.
  As established in Section~\ref{sec:theory}, this is what allows private evidence to propagate across agent boundaries.

  We study two deliberation protocols: \textit{independent forecasting},
  where agents reason solely from their private view; and \textit{iterative deliberation} (the full InfoDelphi protocol), where agents additionally
  observe all other agents' Round 1 forecasts and rationales before producing their final prediction.
  
  \paragraph{Confidence-Weighted Aggregation.}
  \label{sec:aggregation}

  Given $J$ agents' final-round forecasts $\{f_j^{(R)}\}_{j=1}^{J}$, we aggregate them via \textit{Confidence-Weighted (CW) pooling}:
  \begin{align}
    \hat{p}_{\mathrm{CW}} &\;=\; \sigma\!\left(
      \frac{\displaystyle\sum_{j=1}^{J} w_j \cdot \mathrm{logit}(p_j^{(R)})}
           {\displaystyle\sum_{j=1}^{J} w_j}
    \right), \nonumber\\
    w_j &= \bigl|\mathrm{logit}(p_j^{(R)})\bigr|
    \label{eq:cw}
  \end{align}
  where $\sigma(\cdot)$ is the sigmoid function and $w_j = |\mathrm{logit}(p_j^{(R)})|$ weights each agent by the extremity of its prediction.
Aggregation is performed in logit space to respect bounded probabilities. This is equivalent to weighted geometric averaging of odds ratios and coincides with the logarithmic opinion pool under uniform weights~\citep{10.1214/ss/1177013825}.
Agents with stronger private evidence produce more extreme forecasts, so up-weighting them amplifies the most informed signals.

\vspace{-0.8em}
\section{Experimental Setup}
\label{sec:setup}
\vspace{-0.5em}

\paragraph{Dataset.}
Current LLM forecasting systems~\citep{halawi2024approaching,turtel2025llms} primarily emphasize \textit{retrieval quality}, while an equally important capability is \textit{information utilization}: how agents interpret, distribute, and collaboratively reason over retrieved evidence.
To study this dimension in isolation, we construct \textbf{\textsc{PolyGym}}, a controlled forecasting benchmark built from real-world Polymarket questions.
Specifically, \textsc{PolyGym} contains 375 binary forecasting markets with temporally filtered evidence corpora, where each question is paired with up to 30 pre-retrieved documents collected before market resolution.
Unlike live forecasting benchmarks, all methods operate over the same fixed evidence pool, eliminating variability from changing web environments and unstable search results.
This design enables reproducible evaluation of multi-agent deliberation strategies under controlled information conditions.
Importantly, \textsc{PolyGym} explicitly supports the study of \textit{designed information asymmetry}, as it isolates whether performance gains arise from improved reasoning and coordination rather than from superior retrieval alone.
More details in Appendix~\ref{sec:appendix_dataset}.

\paragraph{Baselines.}
We compare against two categories of baselines (details in Appendix~\ref{sec:appendix_details}).
\ding{182} \textit{Single-agent methods}: Zero-shot~\citep{kojima2022large}, Sequential Bayesian~\citep{shi2023language}, Chain-of-Thought~\citep{wei2022chain}, Self-Consistency~\citep{wangself}, Halawi et al.~\citep{halawi2024approaching}, and Superforecaster~\citep{karger2024forecastbench}.
\ding{183} \textit{Multi-agent methods with homogeneous input}: Standard Debate~\citep{du2023improving}, MoA~\citep{wangmixture}, Crowd Ensemble~\citep{schoenegger2024wisdom}, and AIA Forecaster~\citep{alur2025aia}.

\paragraph{Metrics.}
We report two complementary metrics.
The \textit{Brier score} $\mathrm{BS} = \frac{1}{N}\sum_{k=1}^{N}(\hat{p}_k - y_k)^2$ measures probabilistic calibration.
\textit{Accuracy} reports the fraction of questions for which the predicted label $\mathbf{1}[\hat{p} \geq 0.5]$ matches the ground-truth $y$.

\paragraph{Implementation.}
All experiments use \texttt{gpt-5.4-mini} via the OpenRouter API\footnote{\href{https://openrouter.ai/}{openrouter.ai}} as the primary model.
We use $J=3$ agents, $R=2$ deliberation rounds, and $\rho=0.5$ as default settings, with temperature 0.7, random seed 42, and parallelization across 15 workers.

\section{Results}
\label{sec:results}

\subsection{Main Results}

Table~\ref{tab:main} compares InfoDelphi under independent and iterative deliberation protocols with baselines.

\paragraph{\ding{224} Overall performance.}
InfoDelphi achieves the best Brier score (0.178) and accuracy (77.9\%) across all methods.
The improvement stems from two complementary mechanisms: \uline{information partitioning} creates diverse signals that reduce correlated errors, and \uline{rationale-based deliberation} allows agents to correct each other using their private evidence.
Notably, even without deliberation ($R{=}1$), InfoDelphi already achieves the second-highest accuracy (74.1\%), confirming that the diversity introduced by evidence routing provides value \textit{before} any inter-agent communication occurs.
Deliberation then amplifies this by enabling agents to propagate their private insights to others.

\begin{table}[ht]
\centering
\caption{Results on \textsc{PolyGym} ($N=375$). Methods are grouped by category. \textbf{Bold} is the best; \underline{underline} is the second best.}

\label{tab:main}
\resizebox{\columnwidth}{!}{
\begin{tabular}{llcc}
\toprule
\textbf{Category} & \textbf{Method} & \textbf{Brier}$\downarrow$ & \textbf{Acc}$\uparrow$ \\
\midrule
\multirow{6}{*}{\rotatebox{90}{\small Single-Agent}}
 & Zero-shot~\citep{kojima2022large} & \underline{0.201} & 66.7\% \\
 & Seq.\ Bayesian~\citep{shi2023language} & 0.256 & 62.1\% \\
 & Chain-of-Thought~\citep{wei2022chain} & 0.216 & 69.9\% \\
 & Self-Consistency~\citep{wangself} & 0.221 & 69.6\% \\
 & Halawi et al.~\citep{halawi2024approaching} & 0.238 & 65.3\% \\
 & Superforecaster~\citep{karger2024forecastbench} & 0.242 & 68.8\% \\
\midrule
\multirow{4}{*}{\rotatebox{90}{\small Multi-Agent}}
 & Standard Debate~\citep{du2023improving} & 0.202 & 73.9\% \\
 & MoA~\citep{wangmixture} & 0.220 & 70.9\% \\
 & Crowd Ensemble~\citep{schoenegger2024wisdom} & 0.245 & 65.9\% \\
 & AIA Forecaster~\citep{alur2025aia} & 0.247 & 67.7\% \\
\midrule
\multirow{2}{*}{\rotatebox{90}{\small Ours}}
 & InfoDelphi (no deliberation) & 0.206 & \underline{74.1\%} \\
 & \textbf{InfoDelphi} & \textbf{0.178} & \textbf{77.9\%} \\
\bottomrule
\end{tabular}
}
\vspace{-1em}
\end{table}

\paragraph{\ding{224} Information diversity is the key enabler.}
Across all baselines, neither scaling the number of agents, improving prompting strategies, nor adding sophisticated aggregation mechanisms yields meaningful gains without information diversity.
Multi-agent methods with homogeneous input (MoA, Crowd Ensemble, AIA Forecaster) fail to outperform a single Chain-of-Thought agent despite deploying 3--10 agents with various aggregation strategies.
Similarly, single-agent methods with elaborate prompting (Halawi's 7-step scratchpad, Superforecaster's structured reasoning) perform \textit{worse} than simple Chain-of-Thought, suggesting that more elaborate reasoning on noisy evidence amplifies rather than mitigates errors.
The one partial exception is Standard Debate, which benefits from iterative belief revision, yet it remains well below InfoDelphi because agents sharing identical evidence can only reinforce existing interpretations rather than introduce genuinely new perspectives.
These results are consistent with the prediction of Proposition~\ref{prop:decorrelation}: without information asymmetry, inter-agent error correlation remains high and aggregation provides diminishing returns.

\subsection{Ablation Studies}
\label{sec:ablation}

We ablate each design dimension against the default InfoDelphi configuration
($J{=}3$ agents, $R{=}2$ rounds, $\rho{=}0.5$, BM25 routing, confidence-weighted aggregation, with rationale sharing).
Each ablation varies one component while holding others fixed (detailed settings are provided in Appendix~\ref{sec:appendix_details}).

\begin{table}[ht]
\centering
\caption{Ablation results on \textsc{PolyGym}. Each row varies one component from the default InfoDelphi configuration. \textbf{Bold} is the best.}

\label{tab:ablation}
\resizebox{\columnwidth}{!}{
\begin{tabular}{llcc}
\toprule
\textbf{Dimension} & \textbf{Setting} & \textbf{Brier}$\downarrow$ & \textbf{Acc}$\uparrow$ \\
\midrule
\multirow{3}{*}{Rounds}
  & $R{=}1$ (no deliberation) & 0.206 & 74.1\% \\
  & $R{=}2$ (default) & \textbf{0.178} & \textbf{77.9\%} \\
  & $R{=}3$ & 0.196 & 75.7\% \\
\midrule
\multirow{3}{*}{Public ratio}
  & $\rho{=}0.3$ & 0.191 & 75.5\% \\
  & $\rho{=}0.5$ (default) & \textbf{0.178} & \textbf{77.9\%} \\
  & $\rho{=}1.0$ (all public) & 0.193 & 75.7\% \\
\midrule
\multirow{2}{*}{Rationale sharing}
  & Without & 0.211 & 70.4\% \\
  & With (default) & \textbf{0.178} & \textbf{77.9\%} \\
\midrule
\multirow{2}{*}{Aggregation}
  & Uniform weighting & 0.196 & 74.9\% \\
  & Confidence-weighted (default) & \textbf{0.178} & \textbf{77.9\%} \\
\midrule
\multirow{2}{*}{Evidence routing}
  & Random allocation & 0.188 & 76.2\% \\
  & Relevance-based (default) & \textbf{0.178} & \textbf{77.9\%} \\
\bottomrule
\end{tabular}
}
\vspace{-1em}
\end{table}

\paragraph{\ding{224} Creating asymmetry is necessary but not sufficient.}
Section~\ref{sec:theory} argues that effective collective reasoning requires three conditions: diverse signals, common ground, and reasoning exchange.
The ablation results confirm this hierarchy.
Simply partitioning evidence without deliberation already outperforms most homogeneous baselines; adding a second round yields a further $-$0.028 Brier reduction, demonstrating that information asymmetry creates the \textit{potential} for improvement while deliberation \textit{realizes} it.
However, a third round reverses this gain: with a finite evidence pool, two rounds suffice to propagate most private signals, and further deliberation leads to convergence toward a group consensus rather than continued refinement.

\paragraph{\ding{224} How agents communicate determines whether deliberation helps.}
The largest degradation occurs when rationale sharing is removed, reducing deliberation to a closed system.
This validates the DPI argument: exchanging only numeric estimates preserves the Markov structure and prevents private information from flowing across agent boundaries.
With rationale sharing, agents receive a lossy projection of each other's private evidence; without it, information about ground truth can only decrease over rounds.

\paragraph{\ding{224} The balance between diversity and common ground.}
The public ratio $\rho$ reveals an asymmetric failure mode: too little shared context ($\rho{=}0.3$) is as harmful as too much ($\rho{=}1.0$), but for different reasons. The former breaks communication while the latter eliminates diversity.
Even at the optimal split ($\rho{=}0.5$), the \textit{method} of allocation still matters: relevance-based routing outperforms random by 0.010 Brier.
This suggests that effective information asymmetry is not just about \textit{how much} to split, but about ensuring the public pool serves as meaningful common ground while private subsets contain genuinely complementary signals.

\paragraph{\ding{224} Confidence-weighted aggregation amplifies informed agents.}
Replacing confidence-weighted aggregation with uniform weighting degrades Brier by 0.018, showing that not all agents contribute equally after deliberation.
Agents with stronger private evidence produce more extreme predictions, and up-weighting them concentrates influence in the most informed voices (validated in Section~\ref{sec:analysis}).

\paragraph{\ding{224} Summary.}
The ablations reveal that InfoDelphi's gains are driven by the \textit{principle} of designed information asymmetry rather than any single algorithmic component.
Rationale sharing and the public/private split are the load-bearing design choices, while the routing algorithm and aggregation scheme provide further gains but are not irreplaceable, implying that the framework generalizes beyond its current form: alternative evidence partitioning strategies (e.g., dense retrieval) or aggregation methods can be substituted while preserving the benefit of information-diverse deliberation.

\subsection{Cross-Model Generalization}
\label{sec:crossmodel}

\newcommand{\micon}[1]{\raisebox{-1pt}{\includegraphics[height=9pt]{#1}}}
\newcommand{\mname}[2]{\micon{#1}\hspace{2pt}{#2}}

\begin{table}[t]
\centering
\caption{Cross-model generalization of InfoDelphi (mean $\pm$ std over 3 runs).}
\label{tab:crossmodel}
\small
\resizebox{\columnwidth}{!}{
\begin{tabular}{lcccc}
\toprule
 & \multicolumn{2}{c}{\textbf{Independent}} & \multicolumn{2}{c}{\textbf{Iterative}} \\
\cmidrule(lr){2-3}\cmidrule(lr){4-5}
\textbf{Model} & Brier$\downarrow$ & Acc$\uparrow$ & Brier$\downarrow$ & Acc$\uparrow$ \\
\midrule
\mname{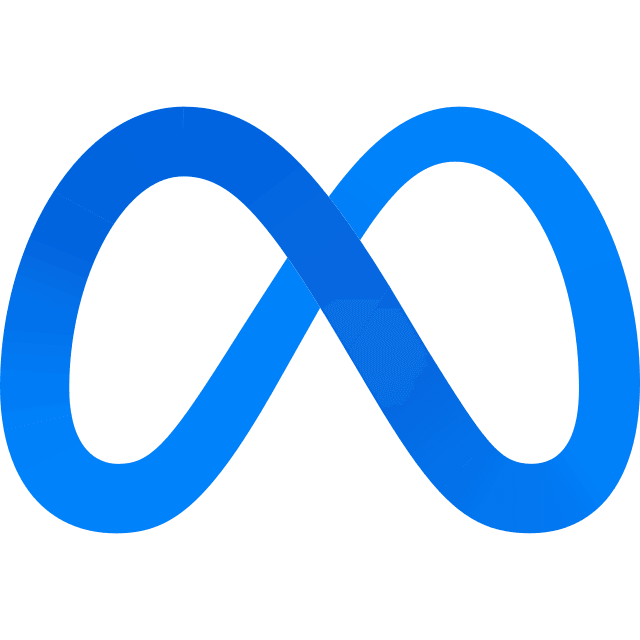}{Llama-4-Scout} & .209\tiny{$\pm$.002} & 67.3\tiny{$\pm$0.7}\% & .204\tiny{$\pm$.002} & 68.3\tiny{$\pm$0.6}\% \\
\mname{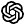}{GPT-5.4-mini} & .208\tiny{$\pm$.005} & 73.8\tiny{$\pm$0.8}\% & .189\tiny{$\pm$.009} & 76.7\tiny{$\pm$1.2}\% \\
\mname{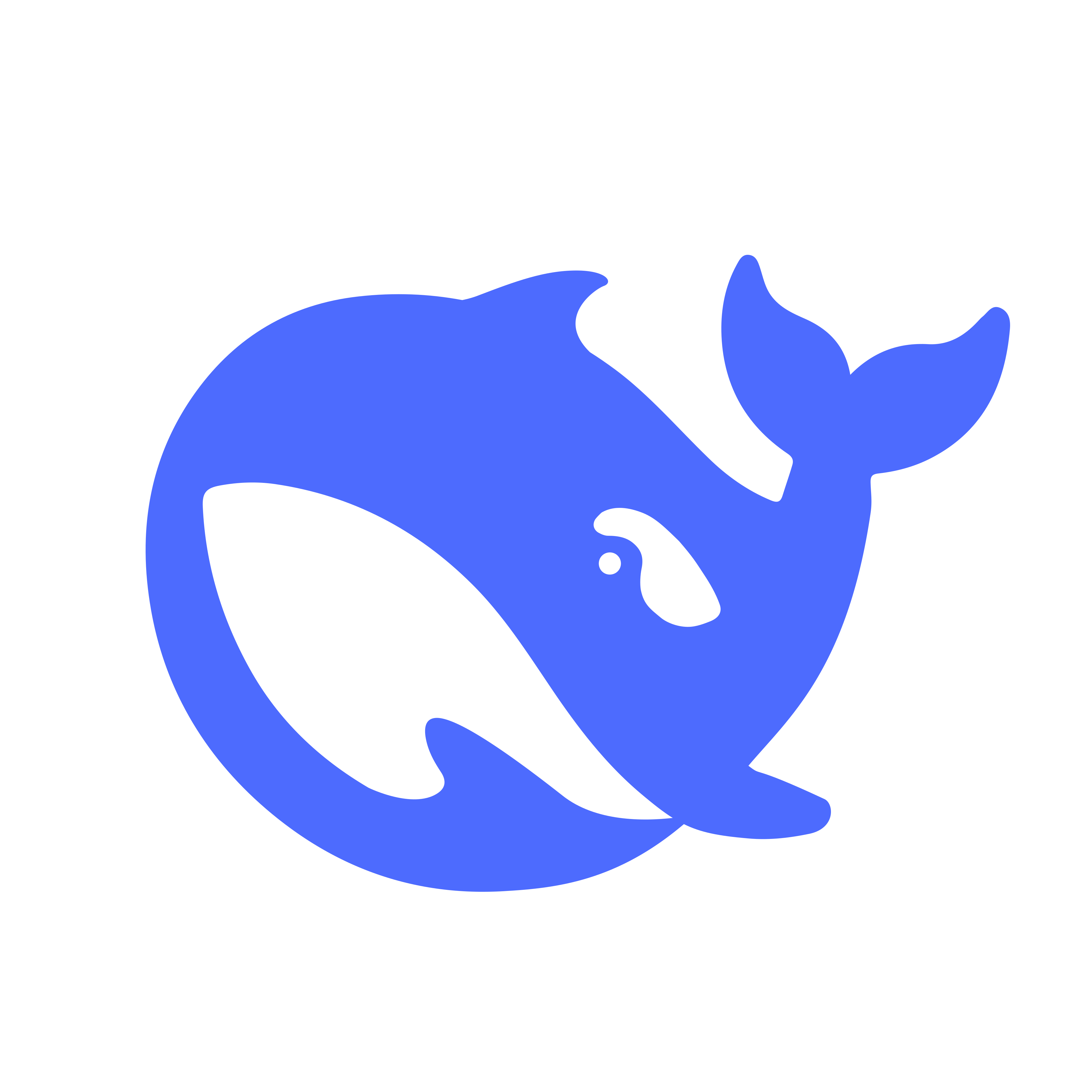}{DeepSeek-V3.2} & .184\tiny{$\pm$.001} & 75.4\tiny{$\pm$0.9}\% & \textbf{.163}\tiny{$\pm$.005} & \textbf{79.6}\tiny{$\pm$1.3}\% \\
\bottomrule
\end{tabular}
}
\vspace{-1em}
\end{table}

To verify that InfoDelphi's gains are not specific to a single model, we evaluate the full framework on additional backends: DeepSeek-V3.2~\citep{liu2025deepseek} and Llama-4-Scout-17B~\citep{adcock2026llama}, as shown in Table~\ref{tab:crossmodel}.
All three models benefit from iterative deliberation, with the gain scaling with model capability.
This pattern suggests that stronger models extract more value from rationale sharing, as interpreting others' reasoning requires sufficient comprehension ability.
Notably, even the smallest model benefits from information partitioning alone (its independent mode already outperforms most homogeneous multi-agent baselines), confirming that the framework is effective across the full spectrum of model capabilities.

\subsection{Detailed Analysis}
\label{sec:analysis}

\paragraph{\ding{224} Why does confidence weighting work in the asymmetric setting?}
Our confidence-weighted aggregation rests on the assumption that agents with stronger private evidence will produce more extreme predictions.
Figure~\ref{fig:extremity} validates this: accuracy increases monotonically from 62.7\% in the least extreme quartile to 94.7\% in the most extreme, while Brier score falls from 0.235 to 0.052.
However, a deeper question is why CW works \textit{better under information asymmetry} than under homogeneous input.
We find that individual agents are equally extreme in both settings (mean $|p - 0.5| \approx 0.36$), but their predictions point in \textit{different directions} under InfoDelphi: inter-agent variance is 1.7$\times$ higher than in the homogeneous setting.
In other words, information asymmetry does not make agents more confident---it makes them confident about \textit{different things}.
CW aggregation therefore synthesizes diverse high-confidence signals grounded in complementary evidence, rather than merely amplifying a shared bias.

\begin{figure}[t]
  \centering
  \includegraphics[width=\linewidth]{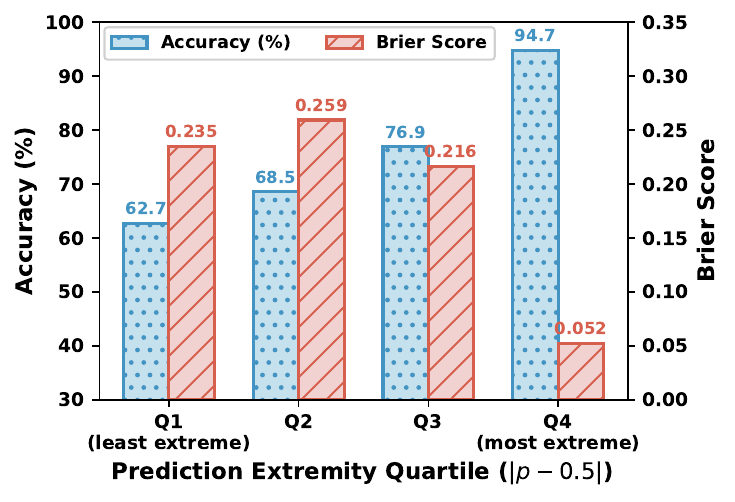}
  \vspace{-2em}
  \caption{Accuracy and Brier score by prediction extremity quartile
    ($|p - 0.5|$). More extreme predictions correlate strongly with higher accuracy,
    validating the confidence-weighted aggregation design.}
  \label{fig:extremity}
  \vspace{-1.5em}
\end{figure}

\vspace{-0.5em}
\paragraph{\ding{224} Information asymmetry preserves diversity through deliberation.}
A key concern with iterative deliberation is herding: agents may converge toward a consensus regardless of their private evidence, collapsing the diversity that makes multi-agent systems valuable.
Figure~\ref{fig:variance} quantifies this by measuring inter-agent prediction variance before and after deliberation under three evidence allocation strategies.
Under All Info ($\rho{=}1.0$) and Random Split ($\rho{=}0.5$), deliberation reduces variance by 90--92\%, indicating near-complete convergence.
Under relevance-based routing ($\rho{=}0.5$), variance reduction is only 73\%, preserving substantially more inter-agent diversity.
This confirms that information asymmetry acts as a structural safeguard against herding: agents holding private evidence that they judge to be strongly predictive maintain distinct perspectives even after observing others' reasoning, because their exclusive information provides an independent basis for conviction that social pressure alone cannot override.

\begin{figure}[t]
  \centering
  \includegraphics[width=\linewidth]{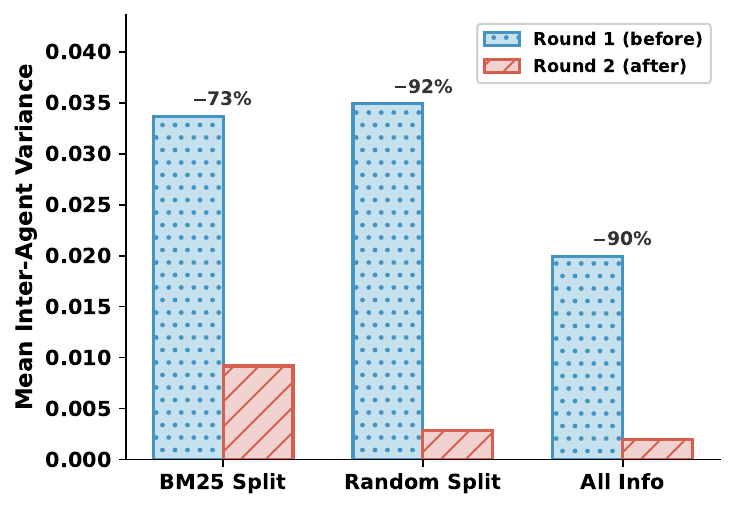}
  \vspace{-2em}
  \caption{Mean inter-agent prediction variance before (Round 1) and after (Round 2) deliberation. BM25 routing preserves substantially more diversity (73\% reduction) compared to Random Split and All Info.}
  \label{fig:variance}
    \vspace{-1em}
\end{figure}

\paragraph{\ding{224} Statistical significance.}
We verify that InfoDelphi's improvements are statistically significant via paired Wilcoxon signed-rank tests on per-question Brier scores (Table~\ref{tab:significance}).
InfoDelphi significantly outperforms all baselines ($p < 0.05$), with the strongest significance against methods that scale agents without information diversity (Crowd Ensemble, AIA Forecaster: $p < 0.001$).

\begin{table}[t]
\centering
\caption{Statistical significance of InfoDelphi vs.\ baselines (paired Wilcoxon signed-rank test).}
\label{tab:significance}
\small
\begin{tabular}{lcc}
\toprule
\textbf{Baseline} & $\Delta$\textbf{Brier} & \textbf{p-value} \\
\midrule
vs.\ Single-Agent & $-$0.038 & $<$0.001 \\
vs.\ Standard Debate & $-$0.023 & 0.021 \\
vs.\ MoA & $-$0.042 & 0.010 \\
vs.\ Crowd Ensemble & $-$0.067 & $<$0.001 \\
vs.\ AIA Forecaster & $-$0.069 & $<$0.001 \\
\bottomrule
\end{tabular}
\vspace{-1em}
\end{table}

\section{Conclusion}
\label{sec:conclusion}
We presented \textbf{InfoDelphi}, a multi-agent forecasting framework built on a simple but overlooked insight: collective reasoning only outperforms individual reasoning when participants hold genuinely different information.
By treating information asymmetry as a first-class design principle, InfoDelphi converts redundant deliberation into productive belief revision, achieving consistent gains across model scales and families on \textsc{PolyGym}.
Probabilistic forecasting represents a particularly demanding test of this principle, requiring synthesis of noisy, contradictory evidence into calibrated estimates under irreducible uncertainty.
That designed information asymmetry yields robust improvements in this setting suggests broad applicability to multi-agent systems where agents reason over shared evidence, from collaborative question answering to scientific hypothesis evaluation to complex decision support.
  
\section*{Limitations}

  \paragraph{Theoretical assumptions.}
  Proposition~\ref{prop:decorrelation} rests on three idealizing assumptions:  a linear error decomposition, public--private independence, and symmetric private noise. In practice, documents retrieved for the same question share topical correlations,
  partially violating the independence assumptions and making the decorrelation bound
  approximate. The linear decomposition further assumes that public and private error components
  combine with fixed weights $\rho$ and $1{-}\rho$, a functional form that LLMs,
  as non-linear functions of their inputs, need not satisfy.

\paragraph{Scope.}
API cost scales linearly with agents × rounds (about 12\$ for 375 questions), which limits our ability to conduct large-scale experiments with more agents or deliberation rounds that would be needed to precisely characterize convergence behavior.
Additionally, as our primary contribution is the multi-agent paradigm of designed information asymmetry, both evidence routing and aggregation employ standard off-the-shelf methods.
More sophisticated alternatives may yield further gains.

\section*{Ethics considerations}
  This work presents a forecasting framework and does not directly enable disinformation or harm beyond risks inherent to LLM-based systems generally. However, automated probability estimates generated at scale could in principle be misused to manipulate market sentiment or create false impressions of consensus. We recommend that InfoDelphi be used as a decision-support tool with human oversight in high-stakes deployment contexts.
\bibliography{references}

\appendix

\section{Discussion}
\paragraph{Key insight.}
Our central finding is that \textit{designed information asymmetry} is necessary for effective multi-agent deliberation.
When all agents receive identical evidence, iterative discussion provides little measurable benefit, as agents merely reinforce correlated beliefs rather than contribute complementary knowledge.
This aligns with the martingale characterization of homogeneous-input debate and suggests that the widely adopted practice of giving all agents the same context fundamentally limits the value of multi-agent interaction.

  \paragraph{Future work.}
Natural extensions include (1)~adaptive routing that dynamically adjusts evidence allocation based on emerging disagreement, (2)~more expressive routing algorithms beyond BM25, such as dense retrieval or learned semantic partitioning, (3)~learned aggregation weights replacing self-reported confidence, (4)~evaluation on open-ended and multi-choice forecasting tasks, and (5)~heterogeneous model panels that leverage model-level diversity in addition to evidence-level asymmetry.

\paragraph{Potential Risks.}
Designed information asymmetry encourages agents to maintain distinct beliefs, but it may also amplify biased or misleading evidence partitions. If private evidence subsets contain low-quality, adversarial, or outdated documents, agents may become confidently polarized around incorrect conclusions rather than self-correcting through deliberation. In high-stakes domains such as finance or geopolitics, this could reinforce speculative narratives or misleading impressions of certainty. In addition, rationale sharing introduces a potential attack surface, where persuasive but flawed rationales propagate errors across agents. Future deployments should therefore incorporate evidence verification, source reliability estimation, and safeguards against adversarial persuasion.

\section{\textbf{\textsc{PolyGym}} Benchmark Details}
\label{sec:appendix_dataset}

\paragraph{Motivation.}
Agentic forecasting systems for prediction markets require two distinct capabilities:
(1)~\textit{retrieval}, identifying relevant information from the web, and
(2)~\textit{utilization}, reasoning over that information to produce calibrated probability estimates.
Existing benchmarks predominantly evaluate these capabilities jointly (Table~\ref{tab:benchmark_comparison}),
making it difficult to determine whether performance gains arise from better retrieval or better collective reasoning.
Also, live benchmarks~\citep{karger2024forecastbench,zeng2025futurex} introduce variability from changing web environments and unstable evidence availability.
Recent work shows that even retrieval-augmented forecasting systems continue to exhibit substantial performance degradation over time, suggesting that access to information alone does not solve the utilization
challenge~\citep{dailyoracle}.

\begin{figure}[H]
    \centering
    \includegraphics[width=1\linewidth]{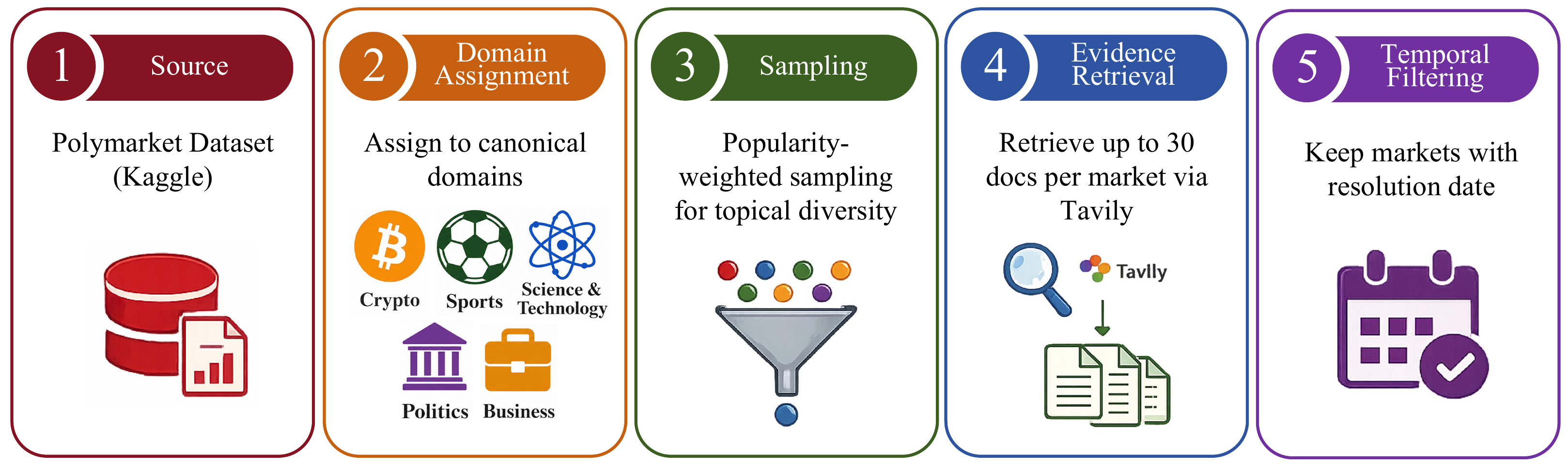}
    \caption{\textbf{\textsc{PolyGym}} Data Construction Pipeline}
    \label{fig:dataset}
\end{figure}

\begin{table*}[!ht]
\centering
\small
\resizebox{\textwidth}{!}{
\begin{tabular}{lcccccc}
\toprule
\textbf{Setup} & \textbf{Static} & \textbf{Reproducible} & \textbf{Time Control} & \textbf{Fixed Evidence} & \textbf{Realistic} & \textbf{Agent Study} \\
\midrule
ForecastBench~\citep{karger2024forecastbench} & \xmark & \xmark & \cmark & \xmark & \cmark & \xmark \\
FutureX~\citep{zeng2025futurex} & \xmark & \xmark & \cmark & \xmark & \cmark & \xmark \\
PolyBench~\citep{cheng2026polybenchbenchmarkingllmforecasting} & \xmark & \xmark & \cmark & \xmark & \cmark & \xmark \\
Prediction Arena~\citep{zhang2026predictionarenabenchmarkingai} & \xmark & \xmark & \cmark & \xmark & \cmark & \cmark \\
Zou et al.~\citep{zou2022forecasting} & \cmark & \xmark & \xmark & \xmark & \xmark & \xmark \\
Halawi et al.~\citep{halawi2024approaching} & \cmark & \xmark & \xmark & \xmark & \xmark & \xmark \\
\midrule
\textbf{\textsc{PolyGym} (Ours)} & \cmark & \cmark & \cmark & \cmark & \cmark & \cmark \\
\bottomrule
\end{tabular}
}
\caption{Comparison of forecasting evaluation setups. \textsc{PolyGym} uniquely provides fixed evidence and full reproducibility, enabling controlled study of information utilization and multi-agent deliberation.}
\label{tab:benchmark_comparison}
\end{table*}

Therefore, our work specifically studies a utilization question:
given the same global evidence pool, can a multi-agent system reason more effectively by distributing information asymmetrically and exchanging rationales?
To isolate this utilization dimension, we construct \textbf{\textsc{PolyGym}} as a controlled benchmark where all methods operate over the same pre-retrieved evidence corpus, while allowing different agents to observe different evidence subsets.
This design enables direct evaluation of whether information asymmetry itself improves deliberative forecasting, independent of retrieval quality.
Conceptually, \textsc{PolyGym} is related to controlled retrieval settings in QA research~\citep{chen2021finqa,yang2018hotpotqa} that separate retrieval quality from downstream reasoning.

\paragraph{Construction.}
\textsc{PolyGym} is derived from a large-scale Polymarket dataset~\citep{kaggle_polymarket_dataset} consisting of approximately 40,000 events and 100,000 prediction markets.
Events are assigned to canonical domains (Crypto, Sports, Science \& Technology, Politics, Business) using metadata fields.
We apply popularity-weighted sampling to ensure topical diversity, yielding 500 binary markets. All questions and evidence documents in \textsc{PolyGym} are in English.

\paragraph{Evidence.}
For each market, we retrieve up to 30 documents via the Tavily search API\footnote{\url{https://www.tavily.com/}} using queries derived from the question text.
A strict temporal cutoff ensures all evidence precedes the market resolution date.
Retrieved documents are deduplicated and truncated to 500 characters.
By pre-constructing a shared evidence corpus, we focus evaluation on how agents interpret, partition, and aggregate information rather than on search strategy differences.

\paragraph{Temporal filtering.}
To reduce contamination from LLM prior knowledge, we retain only markets with resolution date $\geq$ 2025-09-01, yielding the final set of 375 questions used in all experiments. More details are provided in Appendix~\ref{model_details}.

\paragraph{Licenses and intended use.}
The source Polymarket dataset~\citep{kaggle_polymarket_dataset} is licensed under 
the Open Data Commons Database Contents License (DbCL) v1.0. 
Evidence documents retrieved via the Tavily Search API are used 
solely for non-commercial research purposes under its API terms 
of service and are not independently redistributed. \textsc{PolyGym} is 
intended for research use only. The accompanying code is released 
under the MIT License.

\section{Proof of Proposition~\ref{prop:decorrelation}}
\label{sec:appendix_proof}

We provide the complete derivation.
Let $\epsilon_j = \rho \cdot \epsilon^{\mathrm{pub}} + (1 - \rho) \cdot
\epsilon_j^{\mathrm{priv}}$ for agent $j \in \{1, \dots, J\}$.

\paragraph{Covariance between agents.}
For $i \neq j$, expanding by bilinearity of covariance:
  \begin{align}
  \mathrm{Cov}(\epsilon_i, \epsilon_j)
    &= \rho^2 \mathrm{Var}(\epsilon^{\mathrm{pub}}) \nonumber\\
    &\quad + \rho(1{-}\rho)\mathrm{Cov}(\epsilon^{\mathrm{pub}}, \epsilon_j^{\mathrm{priv}}) \nonumber\\
    &\quad + \rho(1{-}\rho)\mathrm{Cov}(\epsilon_i^{\mathrm{priv}}, \epsilon^{\mathrm{pub}}) \nonumber\\
    &\quad + (1{-}\rho)^2\mathrm{Cov}(\epsilon_i^{\mathrm{priv}}, \epsilon_j^{\mathrm{priv}}).
  \end{align}

By the independence assumptions (private components are independent of each other and
of the public component): $\mathrm{Cov}(\epsilon_i^{\mathrm{priv}}, \epsilon_j^{\mathrm{priv}}) = 0$
and $\mathrm{Cov}(\epsilon^{\mathrm{pub}}, \epsilon_j^{\mathrm{priv}}) = 0$.
Therefore:
\begin{equation}
\mathrm{Cov}(\epsilon_i, \epsilon_j) = \rho^2 \mathrm{Var}(\epsilon^{\mathrm{pub}}).
\end{equation}

\paragraph{Individual variance.}
Since $\epsilon^{\mathrm{pub}}$ and $\epsilon_j^{\mathrm{priv}}$ are independent, the cross term vanishes:
\begin{equation}
\mathrm{Var}(\epsilon_j) = \rho^2 \mathrm{Var}(\epsilon^{\mathrm{pub}})
  + (1{-}\rho)^2 \mathrm{Var}(\epsilon_j^{\mathrm{priv}}).
\end{equation}

\paragraph{Correlation.}
  Let $\sigma_{\mathrm{pub}}^2 = \mathrm{Var}(\epsilon^{\mathrm{pub}})$ and assume agents
  are symmetric ($\mathrm{Var}(\epsilon_j^{\mathrm{priv}}) = \sigma_{\mathrm{priv}}^2$ for all $j$),
  which implies $\mathrm{Var}(\epsilon_i) = \mathrm{Var}(\epsilon_j)$ and simplifies
  $\mathrm{Corr}(\epsilon_i, \epsilon_j) = \mathrm{Cov}(\epsilon_i, \epsilon_j) /
  \sqrt{\mathrm{Var}(\epsilon_i)\mathrm{Var}(\epsilon_j)}$
  to $\mathrm{Cov}(\epsilon_i, \epsilon_j) / \mathrm{Var}(\epsilon_j)$:
\begin{equation}
\mathrm{Corr}(\epsilon_i, \epsilon_j)
  = \frac{\rho^2 \sigma_{\mathrm{pub}}^2}{\rho^2 \sigma_{\mathrm{pub}}^2 + (1{-}\rho)^2 \sigma_{\mathrm{priv}}^2}.
\end{equation}
 In the homogeneous baseline where $\rho = 1$ (all agents receive the same input):
  $\mathrm{Corr}(\epsilon_i, \epsilon_j) = 1$ (perfectly correlated errors).
  As $\rho$ decreases, correlation decreases monotonically, reaching 0 when $\rho = 0$.
  At our default $\rho = 0.5$ with equal-variance components,
  $\mathrm{Corr} = 0.25 / (0.25 + 0.25) = 0.5$, compared to $\mathrm{Corr} = 1$
  at $\rho = 1$, representing a \textbf{50\% reduction} in inter-agent error correlation. \qed

\section{Proof of Corollary~\ref{cor:tradeoff}}
  \label{sec:appendix_corollary}

  Define the \textit{communicability} between two agents as the mutual information
  between their forecast errors:
  \begin{equation}
  C(\rho) \;:=\; I(\epsilon_i;\, \epsilon_j).
  \end{equation}
  When $\rho = 0$, the private components $\epsilon_i^{\mathrm{priv}}$ and
  $\epsilon_j^{\mathrm{priv}}$ are independent by assumption
  (Proposition~\ref{prop:decorrelation}), so $C(0) = 0$.
  For $\rho > 0$, agents share the common component $\epsilon^{\mathrm{pub}}$,
  so $\epsilon_i$ and $\epsilon_j$ are dependent and $C(\rho) > 0$.
  By the Data Processing Inequality~\citep{shin2026reasoning}, $C(\rho)$
  upper-bounds the information any function of $\epsilon_i$ (including agent $i$'s
  rationale) can provide to agent $j$: at $\rho = 0$, deliberation cannot
  transfer information between agents.

  \paragraph{Ensemble MSE decomposition.}
  Our objective is to find $\rho^* = \arg\min_{\rho \in [0,1]} L(\rho)$.
  The $J$-agent ensemble MSE decomposes as:
  \begin{equation}
  L(\rho) \;=\; \underbrace{\frac{\sigma^2(\rho)}{J}\bigl[1+(J{-}1)\,r(\rho)\bigr]}_{\text{variance term } V(\rho)}
  \;+\; \underbrace{D(\rho)^2}_{\text{bias term}},
  \end{equation}
  where $\sigma^2(\rho) = \rho^2\sigma_{\mathrm{pub}}^2 + (1{-}\rho)^2\sigma_{\mathrm{priv}}^2$
  is the individual agent variance and $r(\rho)$ is the inter-agent correlation
  from Proposition~\ref{prop:decorrelation}.
  We model $D(\rho)$ as a continuous, non-increasing function of $C(\rho)$, with $D(0) = D_0$ (single-agent bias, since deliberation cannot transfer information when $C(0) = 0$) and $D(\rho) < D_0$ for all $\rho > 0$ (any positive communicability enables strictly more effective deliberation than the no-communication baseline). Continuity of $D$ follows from the continuity of $C(\rho)$ as a function of the mixing weight $\rho$.

  \paragraph{Boundary analysis.}
  \textit{At $\rho = 1$}: $r(1) = 1$, so
  $V(1) = \sigma_{\mathrm{pub}}^2/J \cdot J = \sigma_{\mathrm{pub}}^2$,
  equal to single-agent variance with no ensemble gain.
  For any $\rho = 1 - \varepsilon$ with $\varepsilon > 0$: $r(1{-}\varepsilon) < 1$,
  so $V(1{-}\varepsilon) < V(1)$, while $D(1{-}\varepsilon) \approx D(1)$ by continuity.
  Therefore $L(1{-}\varepsilon) < L(1)$, and $\rho = 1$ is not a minimizer.

  \textit{At $\rho = 0$}: $\sigma^2(0) = \sigma_{\mathrm{priv}}^2$, $r(0) = 0$, and $C(0) = 0$, so $V(0) = \sigma_{\mathrm{priv}}^2 / J$ and $D(0) = D_0$. We show that moving slightly into the interior strictly decreases both terms. Differentiating $\sigma^2(\rho)$ and $r(\rho)$:
\begin{align}
\left.\frac{d\sigma^2}{d\rho}\right|_{\rho=0} &= -2\sigma_{\mathrm{priv}}^2 < 0, &
\left.\frac{dr}{d\rho}\right|_{\rho=0} &= 0,
\end{align}
  where the second identity follows because $\rho^2\sigma_{\mathrm{pub}}^2$ in the numerator of $r$ vanishes to second order at $\rho = 0$. Therefore
\begin{equation}
  \begin{aligned}
  \left.\frac{dV}{d\rho}\right|_{\rho=0}
   &= \frac{1}{J}\Big[\,(\sigma^2)'(0)\,\big(1+(J{-}1)r(0)\big) \\
   &\qquad\quad + \sigma^2(0)\,(J{-}1)\,r'(0)\,\Big] \\
   &= -\,\frac{2\sigma_{\mathrm{priv}}^2}{J} \;<\; 0.
  \end{aligned}
  \end{equation}
  By continuity, there exists $\varepsilon_V > 0$ such that $V(\varepsilon) < V(0)$ for all $\varepsilon \in (0, \varepsilon_V)$. Simultaneously, for any $\rho > 0$ the agents share the common component $\epsilon^{\mathrm{pub}}$, so $C(\rho) > 0$ and by the strict monotonicity assumption $D(\rho) < D_0 = D(0)$, giving $D(\varepsilon)^2 < D(0)^2$ for every $\varepsilon > 0$.
  Combining, $L(\varepsilon) = V(\varepsilon) + D(\varepsilon)^2 < V(0) + D(0)^2 = L(0)$ for sufficiently small $\varepsilon > 0$, so $\rho = 0$ is not a minimizer.

  \paragraph{Existence of interior optimum.}
  Since neither boundary is a minimizer, and $L(\rho)$ is continuous on the
  compact interval $[0,1]$, the Extreme Value Theorem guarantees that the minimum
  is attained at some $\rho^* \in (0,1)$. \qed

\section{Experimental Details}
\label{sec:appendix_details}

\subsection{Baseline Descriptions}

\paragraph{\ding{224} Single-Agent Methods.}
\begin{itemize}
    \item \textbf{Zero-shot}~\citep{kojima2022large}: The LLM predicts without any retrieved evidence, relying solely on internal knowledge.
    \item \textbf{Sequential Bayesian}~\citep{shi2023language}: Evidence items are processed one at a time by $K{=}5$ sequential agents, each updating the previous agent's probability estimate.
    \item \textbf{Chain-of-Thought}~\citep{wei2022chain}: One agent receives all evidence and produces a forecast with chain-of-thought reasoning.
    \item \textbf{Self-Consistency}~\citep{wangself}: The same CoT prompt is sampled 5 times at temperature 0.7. Then, predictions are averaged.
    \item \textbf{Halawi et al.}~\citep{halawi2024approaching}: A 7-step structured scratchpad prompt (rephrase, pro/con arguments, aggregate, calibration check) is run 3 times. Results are combined via trimmed mean that downweights the prediction furthest from the median by 50\%.
    \item \textbf{Superforecaster}~\citep{karger2024forecastbench}: A structured reasoning prompt that instructs the model to decompose the question, estimate base rates, update with evidence, and perform a calibration check.
\end{itemize}

\paragraph{\ding{224} Multi-Agent Methods (Homogeneous Input).}
\begin{itemize}
    \item \textbf{Standard Debate}~\citep{du2023improving}: 3 agents all receive the full evidence corpus, deliberate over 2 rounds sharing predictions and rationales, with final predictions aggregated by mean.
    \item \textbf{MoA}~\citep{wangmixture}: 3 proposer agents independently forecast with full evidence. The 4th aggregator agent receives all proposals with rationales and produces a final synthesized prediction.
    \item \textbf{Crowd Ensemble}~\citep{schoenegger2024wisdom}: 5 independent agents each receive all evidence and predict. Then, the median prediction is used as the final forecast.
    \item \textbf{AIA Forecaster}~\citep{alur2025aia}: 10 independent agents forecast with full evidence, and a supervisor agent identifies disagreements among their reasoning traces and produces a final prediction with a confidence label. If confidence is ``high'', the supervisor's prediction is used; otherwise, the system falls back to the mean of the 10 agents.
\end{itemize}

\subsection{Ablation Settings}

Each ablation modifies one dimension of the default InfoDelphi configuration ($J{=}3$, $R{=}2$, $\rho{=}0.5$, BM25 routing, confidence-weighted aggregation, rationale sharing enabled):

\begin{itemize}[nosep]
  \item \textbf{Rounds}: $R \in \{1, 2, 3\}$. At $R{=}1$, agents predict once with no inter-agent communication (equivalent to independent forecasting). At $R{=}3$, agents undergo two rounds of rationale exchange before the final prediction.
  \item \textbf{Public ratio}: $\rho \in \{0.3, 0.5, 1.0\}$. At $\rho{=}1.0$, all evidence is shared (no private subsets), reducing to standard debate.
  \item \textbf{Rationale sharing}: When disabled, the history summary contains only $(p_k, \ell_k)$ pairs without rationale text.
  \item \textbf{Aggregation}: Uniform weighting replaces confidence-weighted pooling with a simple mean of agent probabilities.
  \item \textbf{Evidence routing}: Random allocation shuffles evidence items with a fixed seed and assigns them round-robin, without BM25 relevance ranking.
\end{itemize}

\subsection{Model Details}
\label{model_details}

Table~\ref{tab:model_details} summarizes the models used in our cross-model experiments. All models' knowledge cutoffs predate our temporal filter (resolution date $\geq$ 2025-09-01), ensuring no data leakage from training.

\begin{table}[h]
\centering
\small
\begin{tabular}{lcc}
\toprule
\textbf{Model} & \textbf{Knowledge Cutoff} & \textbf{Provider} \\
\midrule
GPT-5.4-mini & August 2025 & OpenAI \\
DeepSeek-V3.2 & March 2025 & DeepSeek \\
Llama-4-Scout-17B & August 2024 & Meta \\
\bottomrule
\end{tabular}
\caption{Model knowledge cutoff dates. All cutoffs precede the \textsc{PolyGym} temporal filter (September 2025), preventing prior knowledge contamination.}
\label{tab:model_details}
\end{table}

\subsection{Computational Cost}
\label{sec:appendix_cost}

Table~\ref{tab:cost} reports the estimated API cost for a single full run (375 questions, two settings) per model.

\begin{table}[h]
\centering
\small
\begin{tabular}{lcc}
\toprule
\textbf{Model} & \textbf{Cost / run} & \textbf{API calls / run} \\
\midrule
GPT-5.4-mini & \$12.66 & 3{,}375 \\
DeepSeek-V3.2 & \$3.51 & 3{,}375 \\
Llama-4-Scout-17B & \$0.51 & 3{,}375 \\
\bottomrule
\end{tabular}
\caption{Estimated cost per single run (375 questions $\times$ 9 API calls each). Pricing based on published per-token rates at time of experiments.}
\label{tab:cost}
\end{table}

\section{Case Study: Deliberation in Action}
\label{sec:appendix_case}

We present a complete example showing how InfoDelphi's information asymmetry and rationale sharing enable self-correction. This is the same question used in Figure~\ref{fig:intro}.

\paragraph{Question.} ``Over \$1.8B committed to the MegaETH public sale?'' (\textbf{Ground truth: No.})

\paragraph{Round 1 (Independent predictions).}
Each agent receives the shared public evidence plus its own private subset:

\begin{itemize}[nosep,leftmargin=15pt]
\item \textbf{Agent 1} ($p = 0.86$): ``The strongest evidence points to total commitments exceeding \$1.8B. The official sale dashboard shows \$1.39B committed at one point, and multiple reports describe massive oversubscription...''
\item \textbf{Agent 2} ($p = 0.12$): ``The strongest evidence indicates total committed amounts were around \$1.39B at the dashboard and over \$1B in related coverage, but not over \$1.8B. A Polymarket market page explicitly shows the outcome resolved No...''
\item \textbf{Agent 3} ($p = 0.03$): ``The strongest direct source reports MegaETH's total committed amount at \$1,390,735,255, which is below \$1.8B. Multiple other sources describe the sale as around \$450M committed...''
\end{itemize}

Agent 1's private evidence emphasized ``massive oversubscription'' narratives, while Agents 2 and 3 held private documents with the exact dashboard figure (\$1.39B) and market resolution data.

\paragraph{Round 2 (After rationale sharing).}
After observing each other's rationales:

\begin{itemize}[nosep,leftmargin=15pt]
\item \textbf{Agent 1} ($0.86 \to 0.11$): ``The best direct evidence is the Sonar dashboard showing total committed amount of \$1,390,735,255, which is below \$1.8B...'' \textit{(Self-corrected after seeing Agents 2\&3's citation of the exact figure.)}

\item \textbf{Agent 2} ($0.12 \to 0.08$): ``The Polymarket resolution and dashboard figure consistently place total commitments below \$1.8B. Agent
1's revision further corroborates this.'' \textit{(Maintained position with increased confidence.)}

\item \textbf{Agent 3} ($0.03 \to 0.08$): ``The \$1.39B dashboard figure is unambiguous. Slight upward adjustment reflects residual uncertainty in interpreting `committed' vs.\ final sale volume.'' \textit{(Slight adjustment, remained confident.)}
\end{itemize}

\paragraph{Outcome.} Final CW-aggregated prediction: $\hat{p} = 0.06$ (\textbf{Brier = 0.004}). The single-agent baseline with all evidence predicted $p = 0.95$ (\textbf{Brier = 0.9}), failing because the dominant narrative of ``oversubscription'' overwhelmed the specific dashboard figure buried among 20+ documents.

\paragraph{Key insight.} Information asymmetry ensured that the critical evidence (exact \$1.39B figure) reached at least some agents as private, high-salience information. Rationale sharing then propagated this finding to Agent 1, who recognized the superior specificity of the dashboard data and self-corrected.

\section{Prompt Templates}
\label{sec:appendix_prompts}

We list the exact prompts used in InfoDelphi. All agents receive the same system prompt and structured user prompt; the only difference across agents is their assigned evidence subset.

\paragraph{System Prompt}

\begin{quote}
\small\ttfamily
You are a forecaster for binary (Yes/No) prediction markets. Use only the evidence provided. Prefer high-credibility sources when they conflict. Reflect uncertainty with probabilities near 0.5. You must respond with valid JSON only, no other text before or after.
\end{quote}

\paragraph{User Prompt (Round 1)}

\begin{quote}
\small\ttfamily
Question: \{question\}\\[4pt]
Evidence:\\
{[}1{]} doc\_id=doc\_001 | source=... | title=... | content: ...\\
{[}2{]} doc\_id=doc\_002 | source=... | title=... | content: ...\\
...\\[4pt]
Respond with exactly one JSON object, no other text. Schema: \{"p\_yes": <0-1>, "label": "YES"|"NO", "rationale": "...", "evidence\_used": {[}"doc\_id", ...{]}\}
\end{quote}

\paragraph{User Prompt (Round 2)}

In round 2, the prompt includes a history summary of all other agents' round-1 outputs appended after the evidence:

\begin{quote}
\small\ttfamily
Question: \{question\}\\[4pt]
Evidence:\\
{[}1{]} doc\_id=doc\_001 | ...\\
...\\[4pt]
Previous round summary (other agents):\\
\quad Agent 1: p\_yes=0.850 (YES) --- The evidence suggests...\\
\quad Agent 2: p\_yes=0.120 (NO) --- Based on the pricing data...\\
\quad Agent 3: p\_yes=0.630 (YES) --- Multiple sources indicate...\\
\quad Round mean p\_yes=0.533. Consider whether you agree or disagree with the above.\\[4pt]
Respond with exactly one JSON object, no other text. Schema: \{"p\_yes": <0-1>, "label": "YES"|"NO", "rationale": "...", "evidence\_used": {[}"doc\_id", ...{]}\}
\end{quote}

\paragraph{Output Format}

Each agent returns a JSON object:
\begin{quote}
\small\ttfamily
\{"p\_yes": 0.85, "label": "YES", "rationale": "The strongest evidence from doc\_003 indicates...", "evidence\_used": {[}"doc\_001", "doc\_003"{]}\}
\end{quote}

\section{Qualitative Analysis}
\label{sec:appendix_cases}

We present two representative successes and two failures of InfoDelphi,
illustrating the mechanisms through which information asymmetry and
deliberation help or fail.

\begin{successcase}{S1: Minority Evidence Overrides a Confident Majority}
\textbf{Q:} \textit{Will the Monad public sale commitments be between \$1.8B and \$2B?}\\[2pt]
\textbf{Ground truth:} NO \quad \textbf{InfoDelphi $\hat{p}$:} 0.01 \quad \textbf{Brier:} 0.0001

\smallskip
\setlength{\tabcolsep}{4pt}
\renewcommand{\arraystretch}{1.2}
\begin{tabularx}{\linewidth}{@{}l c R@{}}
\toprule
\textbf{Agent} & $p^{(1)}\!\to\!p^{(2)}$ & \textbf{Round-1 evidence cited} \\
\midrule
\agA{A$_0$} & $0.97 \to 0.01$ & ``record-breaking sale; large oversubscription'' \\
\agB{A$_1$} & $0.01 \to 0.01$ & Coinbase token-sale page: 276M USDC committed \\
\agC{A$_2$} & $0.93 \to 0.01$ & ``oversubscribed; sale closed at cap'' \\
\bottomrule
\end{tabularx}

\smallskip
\textit{Mechanism.} In Round 1, A$_0$ and A$_2$ predicted YES with high
confidence, citing reports of a ``record-breaking'' sale and large
oversubscription. A$_1$, whose private evidence included Coinbase's
authoritative token-sale page, correctly identified that the actual committed
amount was 276M USDC, which is far below the \$1.8B threshold, and thus
predicted NO (0.01). After deliberation, A$_1$'s rationale exposed that the
majority had confused the token's fully diluted valuation (\$2.5B) with actual
USDC commitments; both A$_0$ and A$_2$ immediately corrected to 0.01. This
case illustrates the core benefit of information asymmetry: a single agent
holding an authoritative private source can overturn a confidently wrong
majority through rationale sharing.
\end{successcase}

\begin{successcase}{S2: Deliberation Resolves a Shared Misinterpretation}
\textbf{Q:} \textit{Will Gemini 3.0 be released on November 28, 2025?}\\[2pt]
\textbf{Ground truth:} NO \quad \textbf{InfoDelphi $\hat{p}$:} 0.03

\smallskip
\setlength{\tabcolsep}{4pt}
\renewcommand{\arraystretch}{1.2}
\begin{tabularx}{\linewidth}{@{}l c R@{}}
\toprule
\textbf{Agent} & $p^{(1)}\!\to\!p^{(2)}$ & \textbf{Round-1 evidence cited} \\
\midrule
\agA{A$_0$} & $0.94 \to 0.03$ & ``Gemini 3.0 launched November 18, 2025'' \\
\agB{A$_1$} & $0.97 \to 0.03$ & Multiple outlets confirm Nov.\ 18 launch \\
\agC{A$_2$} & $0.93 \to 0.03$ & Google blog: Nov.\ 18 general availability \\
\bottomrule
\end{tabularx}

\smallskip
\textit{Mechanism.} All three agents in Round 1 had access to evidence stating
that Gemini 3.0 launched on November 18, 2025. All three misinterpreted the
question as asking whether Gemini 3.0 would be released \textit{by}
November 28 (a deadline interpretation), rather than \textit{on} November 28
specifically. Under this reading, a November 18 launch satisfies the
condition, leading to high YES probabilities despite rationales that
correctly noted the release occurred on November 18. After deliberation,
agents read each other's rationales and converged on the correct
interpretation: the question asks about a specific date, and a November 18
release does not satisfy it. All predictions corrected to 0.03. The case
demonstrates that sharing \textit{reasoning}, not just numeric estimates, is
what enables deliberation to resolve shared misinterpretations, consistent
with our theoretical analysis in Section~\ref{sec:theory}.
\end{successcase}

\begin{failurecase}{F1: Evidence Gap Beyond Retrieval}
\textbf{Q:} \textit{Will Ethereum reach \$4{,}600 between September 29 and October 5?}\\[2pt]
\textbf{Ground truth:} YES \quad \textbf{InfoDelphi $\hat{p}$:} 0.02

\smallskip
\setlength{\tabcolsep}{4pt}
\renewcommand{\arraystretch}{1.2}
\begin{tabularx}{\linewidth}{@{}l c R@{}}
\toprule
\textbf{Agent} & $p^{(1)}\!\to\!p^{(2)}$ & \textbf{Round-1 evidence cited} \\
\midrule
\agA{A$_0$} & $0.01 \to 0.02$ & Window high: \$4{,}591.44 on October 3 \\
\agB{A$_1$} & $0.03 \to 0.02$ & Same intraday peak reported across sources \\
\agC{A$_2$} & $0.08 \to 0.02$ & No retrieved source reports \$4{,}600$+$ \\
\bottomrule
\end{tabularx}

\smallskip
\textit{Mechanism.} The retrieved evidence consistently showed Ethereum's
highest intraday price during the window as \$4{,}591.44 on October 3, which
is \$8.56 below the threshold. All agents independently concluded NO in
Round 1, and deliberation only reinforced this shared view. In reality,
Ethereum briefly crossed \$4{,}600 at a moment not captured in the retrieved
documents. No amount of rationale sharing can compensate for evidence that
was never retrieved: the system's upper bound is the quality of its
retrieval. This case highlights a fundamental limitation of RAG-based
forecasting and motivates future work on richer evidence retrieval.
\end{failurecase}

\begin{failurecase}{F2: Correlated Evidence Produces an Overconfident Echo Chamber}
\textbf{Q:} \textit{Israel strikes Iran by November 30?}\\[2pt]
\textbf{Ground truth:} NO \quad \textbf{InfoDelphi $\hat{p}$:} 0.996

\smallskip
\setlength{\tabcolsep}{4pt}
\renewcommand{\arraystretch}{1.2}
\begin{tabularx}{\linewidth}{@{}l c R@{}}
\toprule
\textbf{Agent} & $p^{(1)}\!\to\!p^{(2)}$ & \textbf{Round-1 evidence cited} \\
\midrule
\agA{A$_0$} & $0.98 \to 0.995$ & Reports of Operation Rising Lion, June 13 \\
\agB{A$_1$} & $0.98 \to 0.998$ & Independent coverage of the June 13 strikes \\
\agC{A$_2$} & $0.98 \to 0.995$ & Multiple outlets confirming June 13 events \\
\bottomrule
\end{tabularx}

\smallskip
\textit{Mechanism.} All agents' private evidence contained multiple
independent news reports describing Israeli strikes on Iran on June 13, 2025
(Operation Rising Lion). Every agent predicted YES with 0.98 confidence in
Round 1. Deliberation made matters worse: agents reinforced each other's
certainty, pushing predictions to 0.995--0.998 in Round 2. The market
resolved NO, likely due to the question's resolution criteria differing from
the agents' interpretation. This case illustrates that when all agents'
private evidence is drawn from the same correlated news corpus, information
asymmetry provides no diversification benefit and deliberation degenerates
into an echo chamber.
\end{failurecase}

\end{document}